%% file: acl_latex.tex
\documentclass[11pt]{article}

\usepackage[final]{acl}
\usepackage{amsmath}
\usepackage{mathtools}
\usepackage{amssymb}
\usepackage{amsthm}
\usepackage{supertabular}

\newtheorem{proposition}{Proposition}[section]
\newtheorem{theorem}{Theorem}[section]

\usepackage{times}
\usepackage{latexsym}
\usepackage{booktabs}

\usepackage[T1]{fontenc}

\usepackage[utf8]{inputenc}

\usepackage{microtype}

\usepackage{inconsolata}

\usepackage{graphicx}

\title{ABLE: Representing and Mapping LLMs via Attribution-Based Large-model Embedding}

\author{
  \textbf{Zirui Wang\textsuperscript{1,2}},
  \textbf{Yusen Hou\textsuperscript{1}},
  \textbf{Shaofeng Liang\textsuperscript{1,2}},
  \textbf{Bowen Tian\textsuperscript{1,2}},
  \\
  \textbf{Yanlin Zhang\textsuperscript{1}},
  \textbf{Wenshuo Chen\textsuperscript{1,2}$^*$},
  \textbf{Yutao Yue\textsuperscript{1,2}$^*$}
\\\\
  \textsuperscript{1}The Hong Kong University of Science and Technology (Guangzhou)
  \\
  \textsuperscript{2}Deep Interdisciplinary Intelligence Lab (\textit{$DI^2$Lab})
  \\
  \texttt{chatoncws@gmail.com, yutaoyue@hkust-gz.edu.cn}
\\
  \small{$^*$Corresponding authors}
}

\begin{document}
\maketitle
\begin{abstract}
Large language models (LLMs) have undergone rapid growth, forming a heterogeneous ecosystem whose model relationships and capability profiles lack comprehensive documentation.
Organizing such an ecosystem requires reusable model representations that are comparable across architectures while still reflecting how each model processes shared inputs. 
We propose \textbf{ABLE} (\textbf{A}ttribution-\textbf{B}ased \textbf{L}arge-model \textbf{E}mbedding), a training-free framework that represents each model in an \textit{attribution space}. 
Given a shared probing corpus, ABLE computes feature attributions, aligns token-level attribution scores into a tokenizer-agnostic word-level representation, and projects the resulting attribution patterns into compact model embeddings. 
These embeddings summarize model-level input-dependence patterns and can be reused across downstream tasks. 
We construct ABLE representations for 239 LLMs and evaluate them on model organization and relationship analysis, model routing, and benchmark score prediction. 
Across these tasks, ABLE captures meaningful model-level structure, achieves stronger relationship prediction than parameter- and output-based baselines, supports competitive model routing, and predicts benchmark scores from frozen representations, while theoretical analysis further supports the stability of the representation\footnote{Our code for ABLE is available at this link: https://github.com/ziiroo1126/ABLE}.
\end{abstract}

\begin{figure}[t]
\centering
\includegraphics[width=\columnwidth]{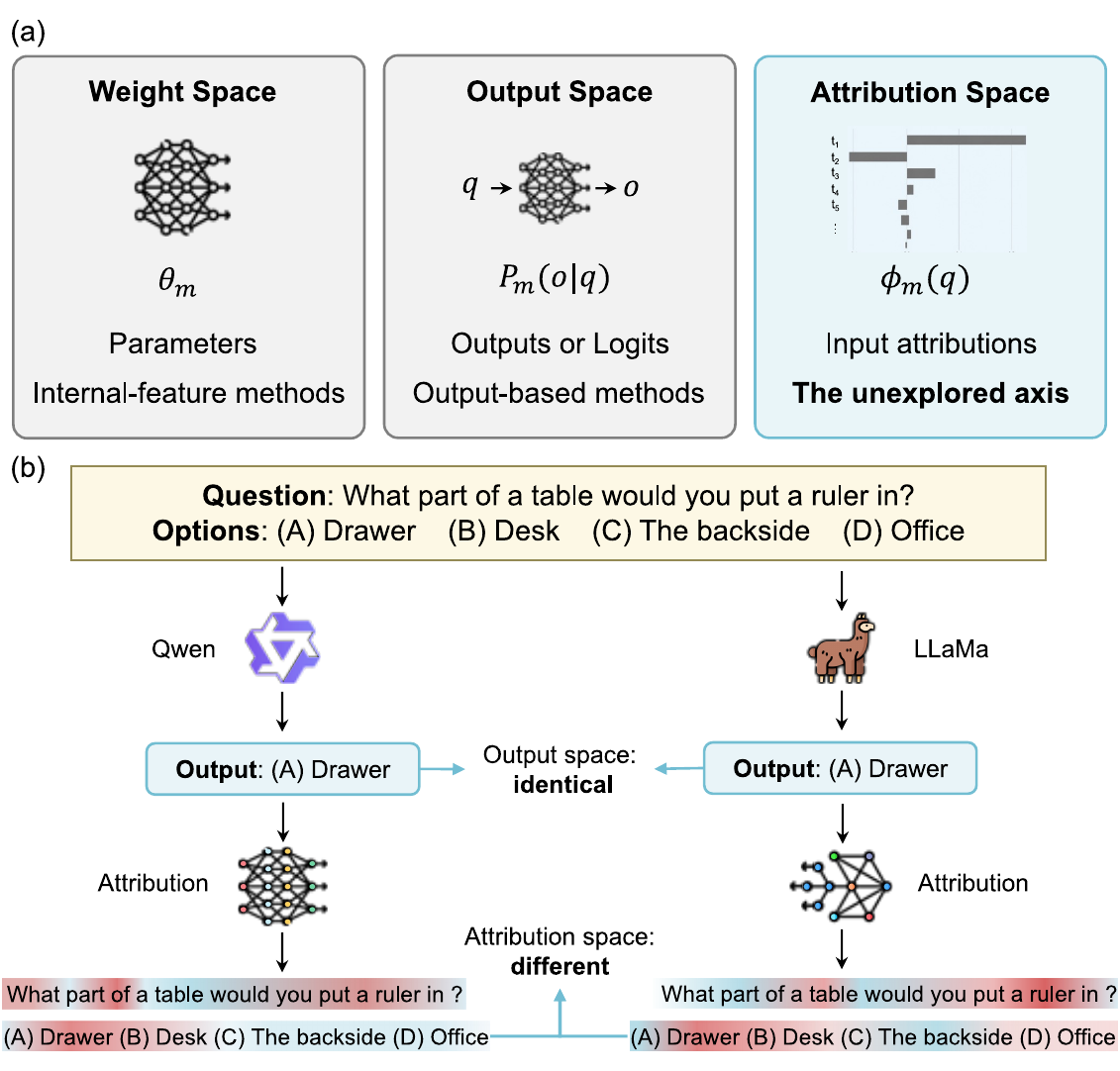}
\caption{(a) \emph{Weight space}, \emph{output space} and our proposed \emph{attribution space} for representing a language model. (b) Two models can produce an identical output yet exhibit clearly
different input attributions.}
\label{fig:overview}
\end{figure}

\section{Introduction}
The ecosystem of large language models (LLMs) is expanding rapidly, with open platforms now hosting hundreds of thousands of models that are continuously fine-tuned, merged, and redistributed \citep{vaswani2017attention, yang2024ecosystem, hao2022language, zhao2023survey, jain2022hugging, rothman2022transformers}. Yet the relationships among these models, such as which model is derived from which and how their capabilities relate, are often only partially documented \citep{barman2024beyond}. Uncovering these relationships would support tasks such as model lineage analysis, model routing, and capability estimation, and in turn benefit broader applications including copyright protection and security auditing \citep{shao2024explanation,ong2024routellm,cheng2024transferring, horwitz2025we}. A natural way to enable these tasks at scale is to embed heterogeneous models into a common representation space where they can be directly compared. A central question, however, is \textit{which aspects of a model such a representation should capture}.

We argue that such a representation should capture \emph{how} a model arrives at its output, namely how it depends on the input evidence available in a shared probing setting. 
To this end, we represent each model in its \textit{attribution space}, defined by feature attributions that trace a model's prediction back to the input features that support it~\citep{wang2024gradient}. 
This view is motivated by a simple observation. 
As shown in Figure~\ref{fig:overview}b, two models can produce the same answer to a question while relying on different parts of the input, so the final answer alone does not fully reveal the model's input-dependence pattern. 
Representing models in attribution space offers \textit{four} properties. 
First, attributions are produced by the model's own computation, grounding the representation in the model rather than in external labels. 
Second, attributions are defined over the shared input, allowing models with different architectures and tokenizers to be placed in one common space. 
Third, they expose how a model depends on its input, an aspect that the answer alone leaves hidden. 
Finally, attributions can be obtained either with or without access to a model's weights, through white-box methods that use gradients or black-box methods that rely only on input and output perturbations, so the attribution space applies in principle to both open-weight and closed-source models. 
As illustrated in Figure~\ref{fig:overview}a, attribution space stands alongside the \textit{weight space} of model parameters or hidden states~\citep{mattheakis2019physical,yadav2023ties, zhu2025independence} and the \textit{output space} of model predictions or logits~\citep{zhuang2024embedllm,yax2024phylolm,oyama2025mapping,wu2026llm}, providing a distinct basis for representing language models.

To build representations in attribution space, we introduce \textbf{ABLE} (\textbf{A}ttribution-\textbf{B}ased \textbf{L}arge-model \textbf{E}mbedding), a training-free framework for constructing model embeddings. 
Given a fixed probing corpus, ABLE computes feature attributions for each model, aligns token-level attribution scores into a shared word-level representation to handle tokenizer heterogeneity, and projects the aligned attribution vector to obtain a compact embedding. 
Each embedding is computed once, summarizes the model's input-dependence pattern over the probing corpus, and can be reused across downstream tasks.

For scalable attribution extraction across hundreds of LLMs, we instantiate ABLE with the Gradient $\times$ Input attribution method~\citep{ancona2019gradient,ancona2017towards,wang2024gradient} and construct representations for 239 LLMs using a probing corpus of 1{,}200 multiple-choice samples. 
We evaluate the resulting embeddings across both structural and practical settings, including model organization and relationship analysis, model routing, and benchmark score prediction. 
Across these tasks, ABLE captures meaningful model-level structure, and we further support the stability of the representation with theoretical analysis.

In summary, our contributions are as follows:
\begin{itemize}
\item We introduce \textit{attribution space} as a distinct basis for representing language models, capturing how a model depends on shared input evidence when producing its outputs.
\item We propose \textbf{ABLE}, a training-free framework that constructs compact model embeddings by computing feature attributions, aligning them across tokenizers into a shared word-level representation, and projecting them into a low-dimensional space.
\item We evaluate ABLE on 239 LLMs across model organization and relationship analysis, model routing, and benchmark score prediction, demonstrating its effectiveness as a reusable model representation and supporting its stability through theoretical analysis.
\end{itemize}

\section{Related Work}

\subsection{LLM Representation}
Prior work has represented and compared LLMs through internal-feature and output-based views.

Internal-feature methods analyze information inside the model, such as parameters or activations. 
Representative approaches study parameter changes from fine-tuning to resolve conflicts during model merging~\citep{yadav2023ties}, test derivative relationships by comparing model weights~\citep{zhu2025independence}, and analyze activation patterns across layers to quantify cross-model similarity~\citep{zhou2024linguistic}. 
These methods provide direct access to internal information and are especially informative when models share compatible architectures.

\begin{figure*}[t]
\centering
\includegraphics[width=0.85\linewidth]{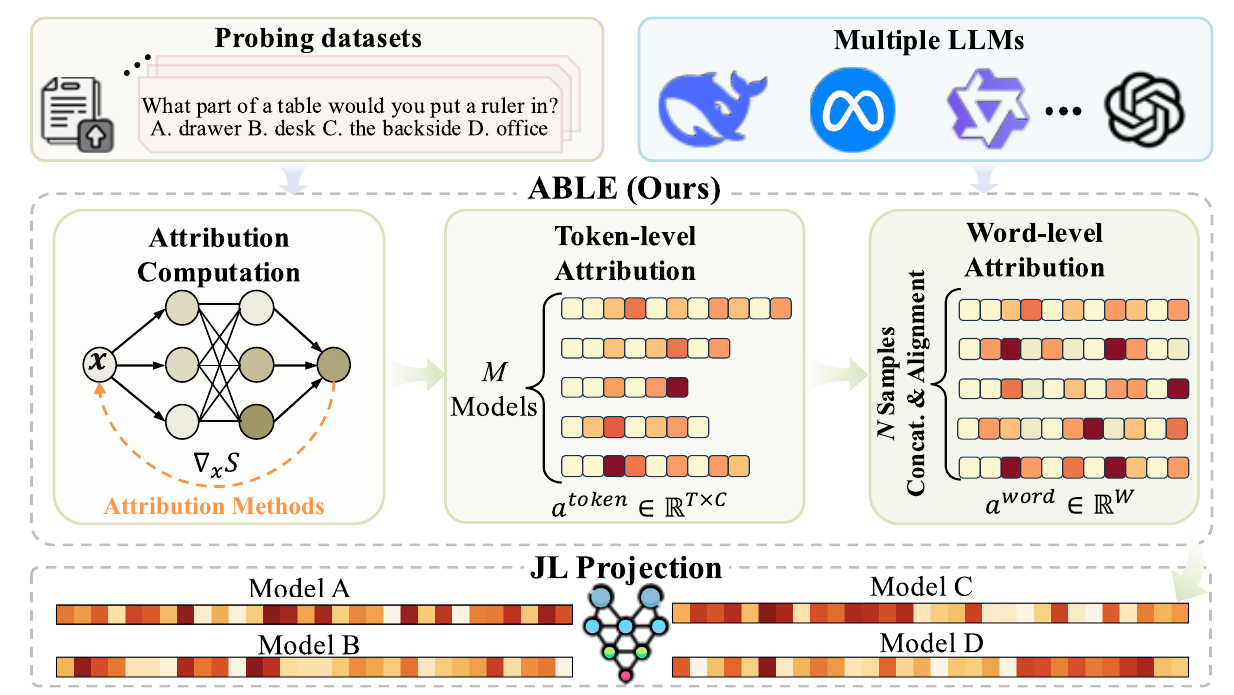}
\caption{Overview of the ABLE pipeline. Given a model, we compute Gradient $\times$ Input attributions on predefined probing datasets, align token-level scores to a unified word vocabulary, and project the result to a compact low-dimensional embedding. JL:  Johnson–Lindenstrauss.}
\label{fig:method}
\end{figure*}

Output-based methods represent models through externally observable behavior, such as logits, generated text, log-likelihoods, or task-performance vectors. 
Prior work constructs model embeddings from log-likelihoods on standardized corpora~\citep{oyama2025mapping}, builds phylogenetic trees by analyzing generated text~\citep{yax2024phylolm,wu2026llm}, and learns model embeddings from task-performance vectors for model routing~\citep{zhuang2024embedllm}. 
These methods are naturally applicable across heterogeneous models and provide scalable signatures for model comparison.

ABLE complements these views by representing models in an interpretable attribution space (Figure~\ref{fig:overview}). 
Rather than representing a model only by parameters or by final outputs, ABLE represents how the model distributes reliance over shared input evidence. 
This attribution-based view yields input-dependence signatures that can be aligned across tokenizers and reused across tasks.

\subsection{Feature Attribution Methods}
Feature attribution methods assign importance scores to input features with respect to a model output, providing interpretable explanations for individual predictions. 
White-box methods leverage internal model information, including Saliency Maps~\citep{simonyan2013deep}, Gradient $\times$ Input~\citep{ancona2019gradient,ancona2017towards}, Integrated Gradients~\citep{sundararajan2017axiomatic}, and Layer-wise Relevance Propagation~\citep{bach2015pixel,samek2021explaining}. 
Black-box methods estimate feature importance through input perturbations or surrogate models, such as LIME~\citep{ribeiro2016should}, which fits local linear models, and SHAP~\citep{lundberg2017unified}, which applies Shapley values.

In the context of LLMs, attribution methods have been widely used for sample-level explanations, such as identifying salient tokens in classification~\citep{liu-avci-2019-incorporating} or generation tasks~\citep{zhao2024reagent}. 
ABLE uses attribution in a different role: instead of treating attribution only as an explanation for a single prediction, it aggregates attribution patterns over a shared probing corpus to construct a reusable representation of the model itself. 
This shifts attribution from sample-level interpretation to model-level representation, enabling comparisons among heterogeneous LLMs in a common attribution space.

\section{Methods}

This section describes the ABLE representation construction pipeline (see Fig.~\ref{fig:method}). Given a collection of $M$ language models and a pre-defined probing dataset, we produce a compact embedding $\mathbf{a}_m \in \mathbb{R}^K$ for each model $m$.

\begin{figure*}[t]
\centering
\includegraphics[width=0.85\linewidth]{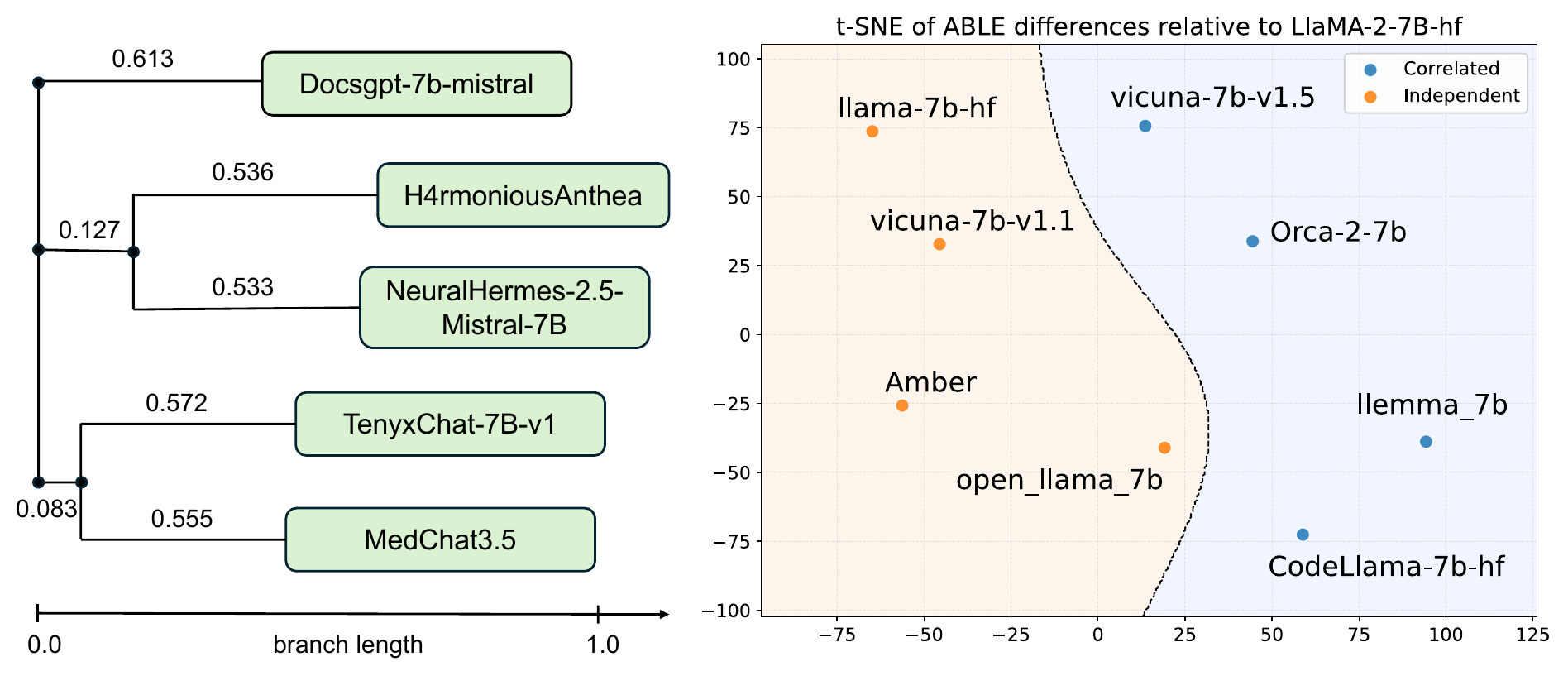}
\caption{Model relationship analysis on documented model families.
Left: Selected Mistral-derived models form an unrooted NJ tree from ABLE embedding distances, consistent with documented fine-tuning relationships. 
Right: Llama-2 family models in ABLE space, showing separation between annotated correlated and independent models.
}
\label{fig:lineage}
\end{figure*}

\subsection{Probing Data Design}
\label{subsec:probing}
We adopt multiple-choice questions (MCQs) from the test sets of standard benchmarks as our probing format. Gold labels are not used during attribution extraction. Each sample consists of a question $q$ and a set of candidate options $\{o_1, o_2, \ldots, o_C\}$, where $C$ is typically 4--5. This design offers two key advantages. First, each option provides a well-defined scalar attribution target: the log-probability of the option sequence given the question, $\log P_m(o_c \mid q)$, can be used for gradient computation. Second, by computing attributions for all options, we capture how the model differentiates among candidate answers, yielding a richer input-dependence pattern than the outputs alone.

\subsection{Attribution Computation}
\label{subsec:attr_comp}
We use Gradient $\times$ Input (GI)~\citep{ancona2019gradient,ancona2017towards,nielsen2022robust} as the default method in all main experiments and study a perturbation-based attribution method in Appendix~\ref{sec:blackbox_attr}. We adopt GI for two reasons: 
(1) computational efficiency, as GI requires only a single forward-backward pass per sample; and 
(2) theoretical tractability, as its gradient-based form allows us to analyze the stability of the resulting attribution representation in Appendix~\ref{sec:theory}.

For each model $m$ and each sample, we compute the attribution of every question token to each option's log-probability. The attribution score for the $t$-th question token with respect to option $o_c$ is:
\begin{align}
    \alpha_t^{(c)} &= \left\langle \mathbf{e}_t, \nabla_{\mathbf{e}_t} S_c \right\rangle, \nonumber \\
    S_c &= \sum_{j=1}^{|o_c|}\log P_m(o_c^{(j)} \mid q, o_c^{(<j)}),
\end{align}
where $\mathbf{e}_t$ denotes embedding of $t$-th question token, and $S_c$ is total log-probability of option $o_c$.

For a question with $T_i^{(m)}$ tokens under model $m$'s tokenizer and $C_i$ options, this yields an attribution matrix 
$\mathbf{A}_{i,m}^{\text{token}} \in \mathbb{R}^{T_i^{(m)} \times C_i}$. 
Flattening this matrix gives an option-aware attribution vector for the sample. 
Concatenating across all $N$ samples produces a model-specific token attribution vector 
$\mathbf{a}_m^{\text{token}} \in \mathbb{R}^{D_{\text{token}}^{(m)}}$, where 
$D_{\text{token}}^{(m)}=\sum_{i=1}^{N}T_i^{(m)}C_i$.

\subsection{Cross-Tokenizer Alignment}
\label{subsec:alignment}
Different models employ different tokenizers, resulting in incompatible dimensions \citep{kudo2018sentencepiece}. To enable cross-model comparison, we align all attributions to a unified word-level vocabulary through a two-step process.

\textbf{Token-to-Character Mapping.} For each token $t$ in the flattened attribution vector $\mathbf{a}_m^{\text{token}}$, let $a_t$ denote its attribution score. We identify its character span $[s_t, e_t)$ in the original text and uniformly distribute the attribution across all characters in this span:
\begin{equation}
    a_{\text{char}}^{(i)} = \frac{a_t}{e_t - s_t}, \quad \forall i \in [s_t, e_t).
\end{equation}
This produces a character-level attribution vector $\mathbf{a}_m^{\text{char}} \in \mathbb{R}^{L}$, where $L$ is the total number of characters in the input text.

\begin{figure*}[t]
\centering
{\includegraphics[width=\linewidth]{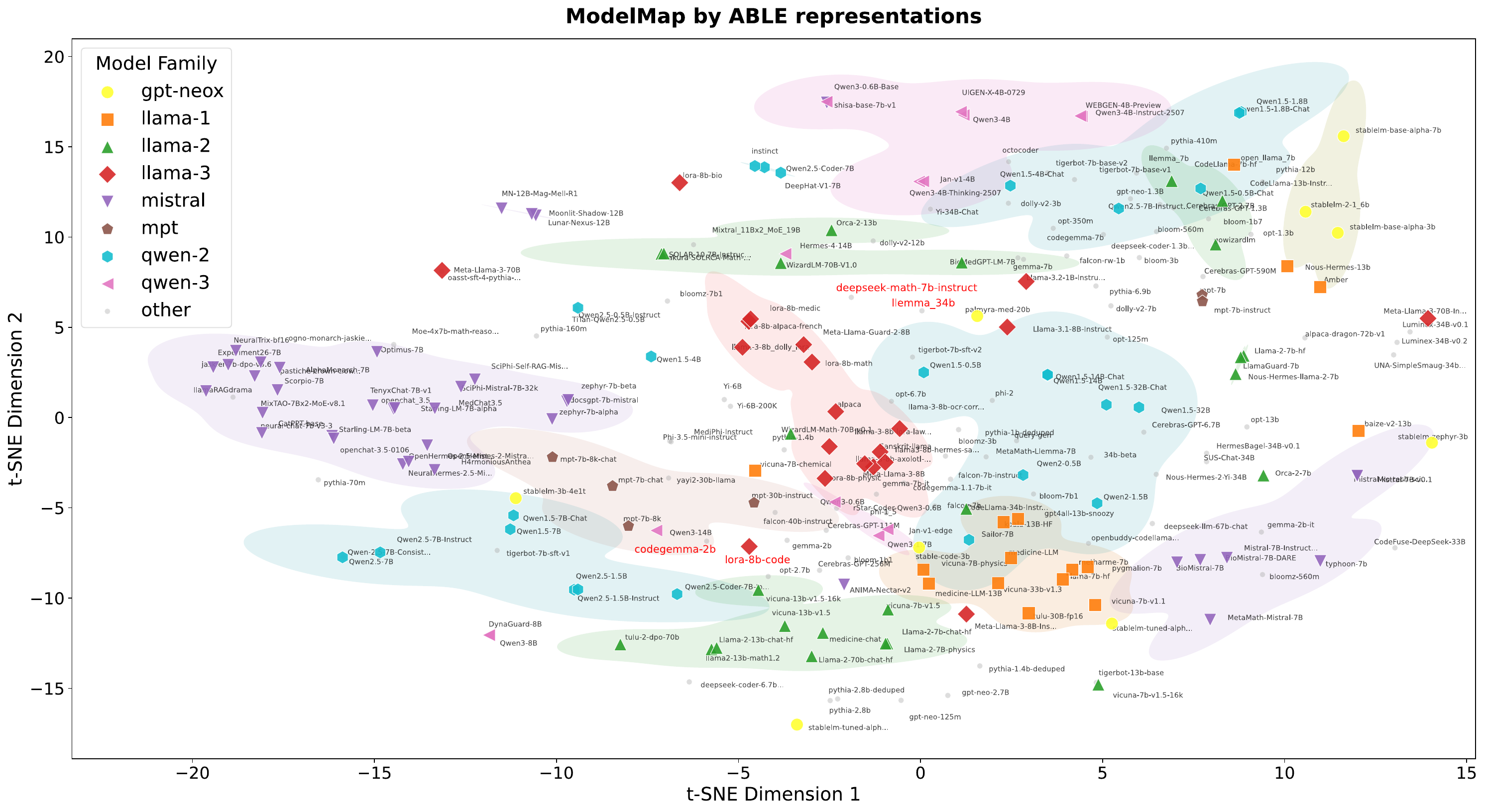}}
\caption{Model Map visualization using t-SNE projection of ABLE embeddings. Models are colored by family, and tightly clustered groups are highlighted with background shading using DBSCAN clustering. Names highlighted in red denote specific examples where models deviate from their families to cluster by functional affinity}
\label{fig:model_map}
\end{figure*}

\textbf{Character-to-Word Aggregation.} Given the character-level attributions, we aggregate them into word-level representations. We segment the character sequence into words using whitespace as the delimiter. For a word $w$ spanning characters $[s_w, e_w)$, its attribution is computed as the sum of its constituent characters:
\begin{equation}
    a_{\text{word}}^{(w)} = \sum_{i=s_w}^{e_w - 1} a_{\text{char}}^{(i)}.
\end{equation}
Whitespace attributions are appended to the preceding word to ensure no attribution is lost.

For each sample-option pair, we apply the token-to-character and character-to-word mapping separately, so that option-specific attribution channels are preserved. 
Since all models are evaluated on the same probing text, this procedure yields a shared word-option attribution coordinate system. 
Concatenating over all options and all samples gives 
$\mathbf{a}_m^{\text{word}} \in \mathbb{R}^{D_{\text{word}}}$, where 
$D_{\text{word}}=\sum_{i=1}^{N} C_i W_i$ and $W_i$ is the number of words in the $i$-th probing input. 
This alignment converts tokenizer-dependent token attributions into a word-level representation.

\subsection{Dimensionality Reduction via Random Projection}
\label{subsec:jl}
The word-level attribution vector $\mathbf{a}_m^{\text{word}}$ can be high-dimensional ($D_{\text{word}}\sim10^5$). To obtain a compact representation, we apply the Johnson-Lindenstrauss (JL) random projection \citep{johnson1984extensions}. Specifically, we sample a random matrix $\mathbf{R} \in \mathbb{R}^{K \times D_{\text{word}}}$ with independent and identically distributed entries drawn from $\mathcal{N}(0, 1/K)$, and compute: $\mathbf{a}_m = \mathbf{R} \, \mathbf{a}_m^{\text{word}} \in \mathbb{R}^K$.
    
By the JL lemma, pairwise distances are preserved up to a multiplicative factor $(1 \pm \epsilon)$ with high probability, provided $K = O(\epsilon^{-2} \log M)$. In practice, we determined the optimal value of $K$ via downstream task performance (see Appendix~\ref{sec:appendixb}). 

\section{Experiments}
We evaluate ABLE through a series of experiments designed to assess both the structural validity and practical utility of its model representations. 
We first examine whether ABLE captures meaningful model-level structure through known-family structure and relationship analysis (Sections~\ref{subsec:lineage} and \ref{subsec:relation}). 
We then evaluate its utility in downstream settings, including model routing and benchmark score prediction (Sections~\ref{subsec:routing} and \ref{subsec:benchmark}).

\subsection{Experimental Setup}
\label{subsec:setup_exp}

\begin{figure*}[t]
\centering
\scalebox{0.90}{\includegraphics[width=\linewidth]{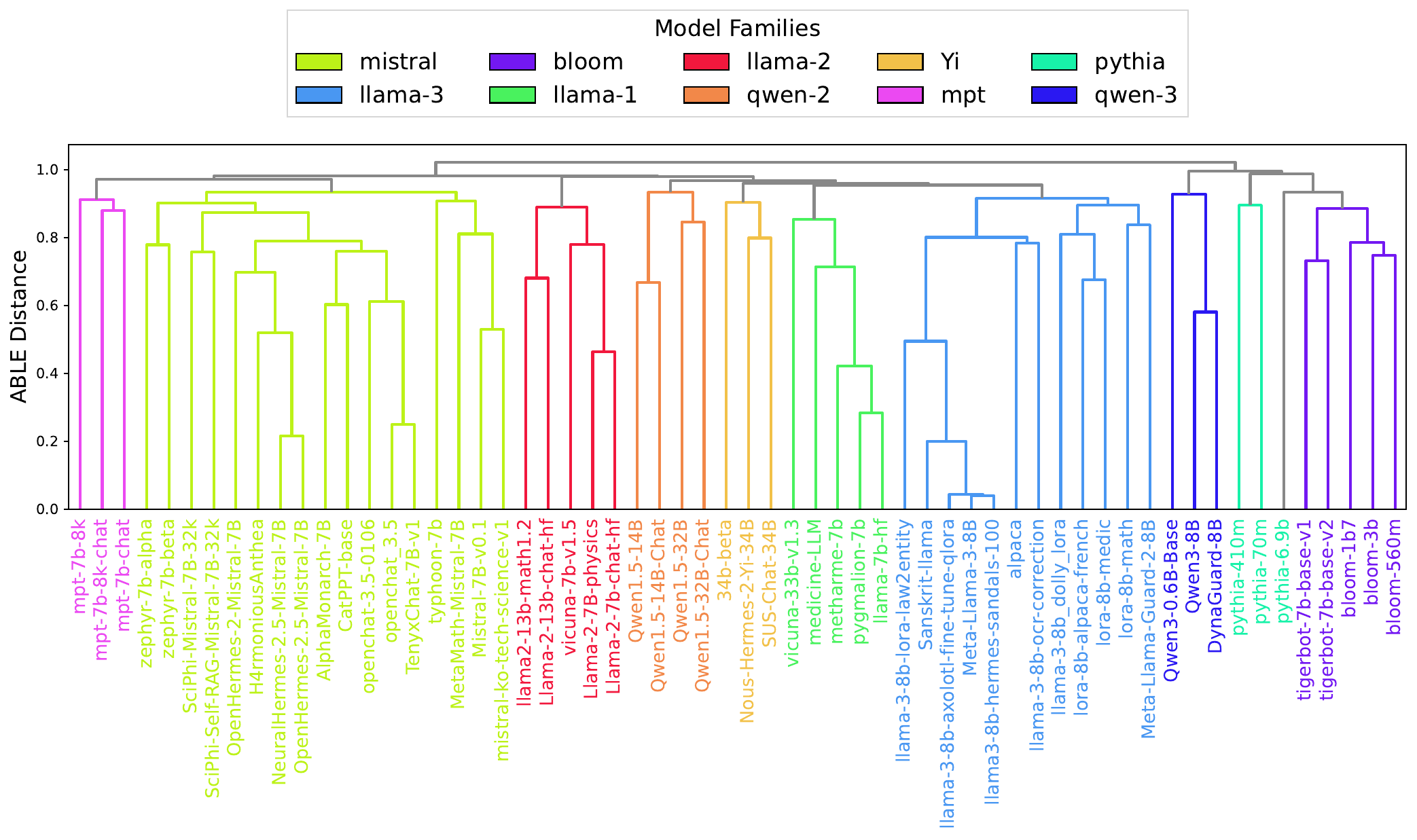}}
\caption{Hierarchical clustering of representative model families using ABLE embeddings. Branches are colored by model family. The tree visualizes similarity structure rather than a literal evolutionary history.}
\label{fig:phylo_tree}
\end{figure*}

\textbf{Datasets.} We construct a balanced probing dataset $\mathcal{D}$ by randomly sampling 200 instances from each of six benchmarks: ARC-Challenge \citep{clark2018think}, Winogrande \citep{sakaguchi2021winogrande}, MMLU \citep{hendryckstest2021, hendrycks2021ethics}, Hellaswag \citep{zellers2019hellaswag}, GPQA \citep{rein2024gpqa}, and CommonsenseQA \citep{talmor-etal-2019-commonsenseqa}, yielding $N = 1{,}200$ samples in total. This combination ensures diverse coverage across reasoning and knowledge domains. For model routing experiments, we additionally use the evaluation dataset from EmbedLLM \citep{zhuang2024embedllm}.

\textbf{Models.} We evaluate $239$ open-weight, decoder-only autoregressive language models whose checkpoints and tokenizers were accessible. We seek coverage across model families, parameter scales, and training types, while deliberately including documented base--derivative cohorts for relationship evaluation; models were not selected based on ABLE's performance. The collection spans 24 families, model sizes from 0.07B to 72B, and releases from 2022 to 2025. Detailed selection criteria are discussed in Appendix~\ref{sec:model_list}.

\textbf{Computational Cost.} 
ABLE requires a one-time attribution extraction step. We report the extraction cost and comparisons with output-based baselines in Appendix~\ref{sec:runtime}.

\subsection{Model Relationship Analysis and Atlas}
\label{subsec:lineage}
To evaluate whether ABLE representations capture meaningful model-level structure and documented relationships among LLMs, we present experiments at two scales: first, small-scale verification on model families with known relationships; second, constructing a comprehensive atlas of the full model collection.

\textbf{Verification on Mistral and Llama-2 Families.} 
To examine whether ABLE reflects documented relationships within model families, we conduct two small-scale case studies. 
For the Mistral family, we select five leaf models with publicly disclosed fine-tuning histories (DocsGPT, NeuralHermes-2.5, TenyxChat-7B, MedChat3.5, and H4rmoniousAnthea) and construct an unrooted tree using the Neighbor-Joining method~\citep{trees1987neighbor} on cosine distances computed from ABLE embeddings. 
As shown in Fig.~\ref{fig:lineage} (left), the resulting tree is consistent with the documented fine-tuning relationships disclosed by model creators~\citep{docsgpt2023, neuralhermes2024, tenyxchat2024, medchat2024, h4rmoniousanthea2024}. 
For the Llama-2 family, we examine eight models, following the relationship annotations by~\citet{zhu2025independence}. 
As shown in Fig.~\ref{fig:lineage} (right), correlated and independent models are separated in ABLE space, with a simple SVM boundary distinguishing the two groups.
We further repeat this Llama-2 case study with \textit{perturbation-based black-box attributions}. The resulting embeddings show a similar separation pattern, suggesting that the attribution-space formulation is not tied to GI alone; details are provided in Appendix~\ref{sec:blackbox_attr}. We further show that, even with incomplete provenance metadata, ABLE retrieves candidate relationships corroborated by parameter‑space evidence (Appendix~\ref{sec:undocumented_retrieval}).


\begin{table*}[t]
\centering
\caption{Model relation prediction performance of ABLE and five baselines (Values are mean $\pm$ standard deviation over 5 random seeds). Bold: the best performance.}
\label{tab:relation}
\begin{tabular}{lccccc}
\toprule
\textbf{Method} & \textbf{Accuracy} & \textbf{Precision} & \textbf{Recall} & \textbf{F1} & \textbf{AUC} \\
\midrule
Random & 0.489$\pm$0.124 & 0.367$\pm$0.093 & 0.733$\pm$0.186 & 0.489$\pm$0.124 & 0.550$\pm$0.139 \\
Greedy & 0.674$\pm$0.096 & 0.527$\pm$0.313 & 0.222$\pm$0.157 & 0.309$\pm$0.203 & 0.561$\pm$0.110 \\
PhyloLM & 0.704$\pm$0.141 & 0.596$\pm$0.182 & 0.844$\pm$0.169 & 0.667$\pm$0.090 & 0.893$\pm$0.059 \\
Log-Likelihood & 0.721$\pm$0.041 & 0.586$\pm$0.064 & 0.861$\pm$0.150 & 0.692$\pm$0.070 & 0.891$\pm$0.053 \\
  UTS (adapted)
  & 0.815$\pm$0.052
  & 0.662$\pm$0.076
  & \textbf{0.956$\pm$0.061}
  & 0.778$\pm$0.041
  & \textbf{0.911$\pm$0.067} \\
\textbf{ABLE (Ours)} & \textbf{0.867$\pm$0.042} & \textbf{0.836$\pm$0.077} & 0.756$\pm$0.093 & \textbf{0.790$\pm$0.065} & 0.906$\pm$0.057 \\
\bottomrule
\end{tabular}
\end{table*}

\begin{table*}[t]
\centering
\caption{Model routing performance of ABLE against three baselines (Values are mean $\pm$ standard deviation over 5 random seeds). ABLE router is competitive with EmbedLLM router.}
\label{tab:routing}
\begin{tabular}{lcccc}
\toprule
 & Random & Single-Best & EmbedLLM & \textbf{ABLE} \\
\midrule
Router Accuracy & 0.413 $\pm$ 0.101 & 0.605 $\pm$ 0.000 & 0.665 $\pm$ 0.003 & \textbf{0.676 $\pm$ 0.001} \\
\bottomrule
\end{tabular}
\end{table*}


\textbf{Global Model Map.} 
We then project the ABLE embeddings into a two-dimensional space using t-SNE~\citep{maaten2008visualizing} to construct a Model Map. 
As shown in Fig.~\ref{fig:model_map}, three patterns emerge. 
First, models tend to group by family, including Llama, Qwen, and Mistral. 
Second, fine-tuned models often appear near their corresponding base models. For example, multiple Llama-3 LoRA-tuned variants cluster around Llama-3-8B model. 
Third, some fine-tuned models deviate from their original families and move closer to models with related task specialization. 
For instance, Llemma-34B and DeepSeek-Math-7B-Instruct appear near each other, consistent with their shared focus on mathematical reasoning, while LoRA-8B-Code and CodeGemma-2B show a similar proximity associated with code-related fine-tuning (Fig.~\ref{fig:model_map}). 
These patterns suggest that ABLE embeddings capture both family-level proximity and functional affinity among heterogeneous LLMs.

\textbf{Cross-Family Hierarchical Clustering.} 
We further select representative models from ten families (Figure \ref{fig:phylo_tree}), compute pairwise cosine distances from their ABLE embeddings, and construct a hierarchical tree with branches colored by family. 

As shown in Fig.~\ref{fig:phylo_tree}, the resulting hierarchy reveals several patterns consistent with known architectural and training information. 
First, models within the same family tend to form cohesive subtrees, suggesting that ABLE embeddings preserve family-level similarity. 
Second, Yi models cluster closely with the LLaMA family, consistent with the documented use of a LLaMA-like architecture. 
Similarly, Bloom and Pythia appear on nearby branches, consistent with their shared GPT-style architectures. 
Third, Qwen2 and Qwen3 occupy different regions of the hierarchy: Qwen2 lies closer to the LLaMA-dominated subtree, whereas Qwen3 branches toward the Pythia-Bloom region. 
This separation is consistent with documentation reporting changes in Qwen3, including reasoning-focused training objectives, multi-stage training pipelines, and reinforcement learning strategies~\citep{bai2023qwen, team2024qwen2, yang2025qwen3}. 
Overall, these observations suggest that ABLE representations reflect both family-level similarity and differences associated with model architecture and training methodology.


\subsection{Relation Prediction}
\label{subsec:relation}

This section evaluates whether ABLE embeddings can support pairwise model relationship prediction. Given two models, the task is to predict whether they have a documented model relationship, which is useful for model ecosystem auditing.

To construct the relation dataset, we identify positive model pairs with documented relationships from the 239 models using metadata from Hugging Face, followed by manual verification. We then sample negative pairs at a 2:1 ratio relative to positive pairs, yielding 135 pairs in total. A negative label indicates that no documented relationship was found in the inspected metadata and does not imply that the two models are necessarily independent. The dataset is split 8:2 into train and test sets. For a model pair $(a,b)$, we construct the pair feature $\mathbf{x}_{ab}=[|\mathbf{a}_a-\mathbf{a}_b|;\mathbf{a}_a-\mathbf{a}_b]$  and train an RBF-kernel SVM classifier.


\begin{table*}[t]
\centering
\caption{Benchmark score prediction performance.  We report the Pearson ($r$) and Spearman ($\rho$) correlation coefficients between the reported benchmark scores and the scores predicted by ridge regression using ABLE. }
\label{tab:benchmark}
\begin{tabular}{lccccc}
\toprule
 & ARC & HellaSwag & MMLU & TruthfulQA & WinoGrande \\
\midrule
Pearson $r$ & 0.8781 & 0.7764 & 0.8554 & 0.9127 & 0.8374 \\
Spearman $\rho$ & 0.8898 & 0.8157 & 0.8637 & 0.8222 & 0.8602 \\
\bottomrule
\end{tabular}
\end{table*}


We compare against five baselines: 
(1) Random, which uniformly predicts correlated or independent; 
(2) Greedy, which predicts that all within-organization pairs are correlated and cross-organization pairs are independent; 
(3) Log-Likelihood~\citep{oyama2025mapping}, which computes log-likelihood vectors on a fixed dataset as model features; 
(4) PhyloLM~\citep{yax2024phylolm}, an output-based method that uses similarity distances to other LLMs as a signature vector; and
(5) an adaptation of Unified Topological Signatures
(UTS)~\citep{rottach2025topology}, which summarizes the geometry of model-generated document embeddings; implementation details are provided in Appendix~\ref{sec:uts_adaptation}.
Because parameter-based methods such as MoTHer~\cite{horwitz2025unsupervised} require compatible layer-wise weight shapes, we provide a detailed comparison on same-architecture models in Appendix \ref{sec:mother_comparison}.


As shown in Table~\ref{tab:relation}, ABLE achieves the best performance on Accuracy, Precision, and F1, whereas UTS obtains the highest mean recall and AUC.
Notably, ABLE and other methods exhibit different precision-recall trade-offs: UTS attains the highest recall but with lower precision, whereas ABLE achieves substantially higher precision at the cost of moderate recall. 
This indicates that ABLE favors precision over recall in this setting.
In other words, ABLE makes fewer positive predictions, but those predictions are more likely to correspond to documented model relationships.

\subsection{Model Routing}
\label{subsec:routing}


To evaluate the utility of ABLE embeddings in model routing, we follow the matrix factorization setup of EmbedLLM~\citep{zhuang2024embedllm}, using the same dataset and evaluation metric. In this setup, a learnable projection maps frozen question embeddings into the model feature space, and the probability that a model answers a question correctly is predicted through element-wise interaction. Unlike EmbedLLM, which learns model embeddings end-to-end for the routing task, we keep ABLE embeddings frozen and only learn the alignment from question embeddings to the ABLE model space.


As shown in Table~\ref{tab:routing}, the router based on frozen ABLE embeddings achieves 67.6\% accuracy, slightly higher than the end-to-end trained EmbedLLM router. While the accuracy gain is modest, the main advantage of ABLE lies in its reusable model representations. When new models are added to the pool, their ABLE embeddings can be computed independently, without recomputing embeddings for existing models. In contrast, EmbedLLM learns model embeddings as task-specific parameters, which may require retraining when the model pool changes.

\subsection{Benchmark Score Prediction}
\label{subsec:benchmark}


In this section, we investigate whether ABLE representations contain signals predictive of LLM performance on benchmarks. 
Using scores from five benchmarks on the Hugging Face Open LLM Leaderboard~\citep{open-llm-leaderboard-v2, myrzakhan2024open, open-llm-leaderboard-v1} as labels and ABLE embeddings as input, we evaluate a ridge regression model~\citep{mcdonald2009ridge} with leave-one-out cross-validation.

As shown in Table~\ref{tab:benchmark} and Fig.~\ref{fig:benchmark}, predictions from ABLE embeddings correlate strongly with reported benchmark scores, achieving Spearman $\rho$ values between 0.81 and 0.89. To assess whether probe--target overlap drives these results, we
reconstruct ABLE after excluding each overlapping target benchmark from the probing corpus. The resulting correlations differ from the full-corpus results by at most 0.024 (Appendix~\ref{sec:robustness}). This indicates that rankings induced by ABLE-based predictions are largely consistent with rankings from standard evaluations. These findings suggest that ABLE representations contain linearly recoverable signals related to model capability. 
\begin{figure}[!b]
\centering
\includegraphics[width=0.9\columnwidth]{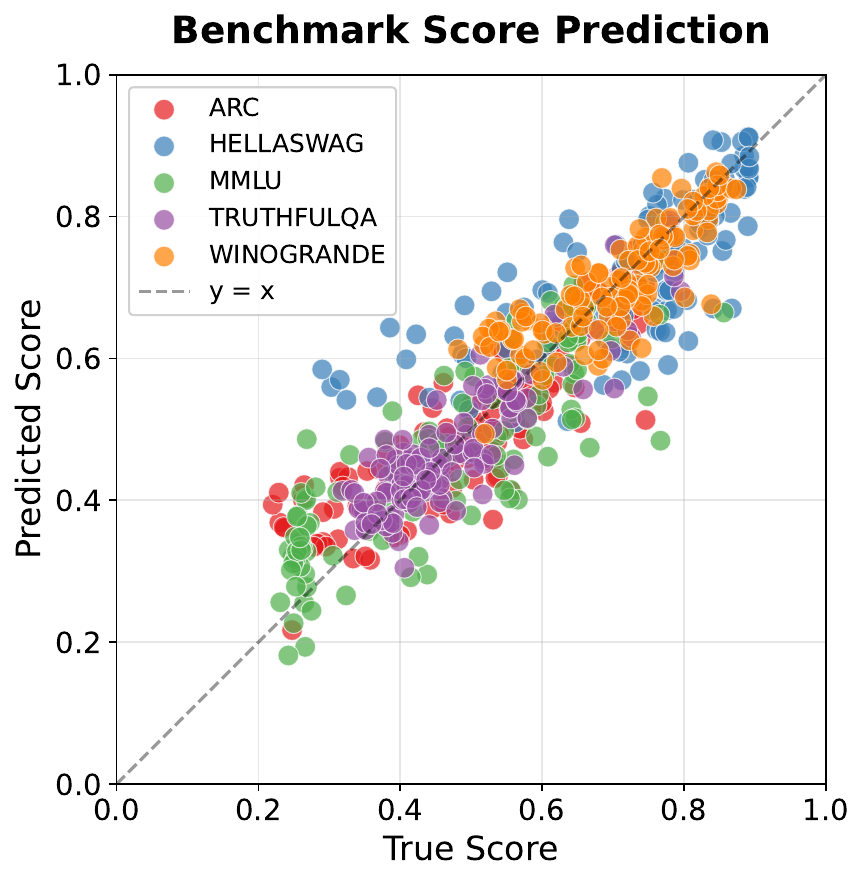}
\caption{Predicted benchmark scores using ABLE against reported benchmark scores on five benchmarks. 
}
\label{fig:benchmark}
\end{figure}
For large-scale model comparison, especially when relative ranking is more important than exact score estimation, ABLE can help prioritize models for further evaluation. We also evaluate ABLE for model capability-ranking analyses detailed in Appendix~\ref{sec:capability_ranking}.


\section{Conclusion}


In this work, we introduced ABLE, a training-free framework for representing language models in attribution space. 
Instead of representing models only through their parameters or outputs, ABLE summarizes how each model depends on shared input evidence over a common probing corpus. 
By computing feature attributions, aligning them across tokenizers into a shared word-level representation, and projecting the resulting attribution patterns into compact embeddings, ABLE provides reusable representations for heterogeneous LLMs.

Experiments on 239 LLMs show that ABLE captures meaningful model-level structure and supports multiple downstream analyses. 
We further provide theoretical analysis supporting the stability of the representation. 
These results suggest that attribution-space representations offer a useful perspective for organizing, comparing, and understanding the rapidly evolving LLM ecosystem.


\section*{Limitations}

\paragraph{Computational Efficiency.} 
Computing GI-based ABLE embeddings requires attribution extraction, which involves one forward and one backward pass per probing sample. 
This makes GI-based ABLE more expensive to extract than purely output-based features. 
However, the extraction is performed once per model, and the resulting embeddings can be reused across downstream tasks. 
We report the extraction cost and runtime comparison in Appendix~\ref{sec:runtime}.

\paragraph{Attribution Method.} 
For scalable extraction across hundreds of LLMs, this work instantiates ABLE with Gradient $\times$ Input. 
We do not systematically evaluate alternative attribution estimators such as Integrated Gradients~\citep{sundararajan2017axiomatic} or SmoothGrad~\citep{smilkov2017smoothgrad}, which may provide smoother or higher-fidelity attribution estimates at the cost of additional computation. 
Exploring the trade-off between attribution fidelity, robustness, and scalability is an important direction for future work.

\paragraph{Closed-Source Models.} 
Our main experiments instantiate ABLE with gradient-based attributions and therefore focus on open-weight models. 
We include a preliminary perturbation-based attribution analysis in Appendix~\ref{sec:blackbox_attr}, showing that a similar separation pattern can be obtained without gradient access in a controlled Llama-2 case study. 
However, extending ABLE to general API-only models still requires robust perturbation estimators and sufficiently informative output signals, which we leave for future work.

\paragraph{Interpreting Model Relationships.} 
ABLE embeddings capture similarity in attribution space, which should not be interpreted as direct proof of model derivation. 
Similar training data, objectives, architectures, or fine-tuning tasks may also lead to similar input-dependence patterns. 
Therefore, relation prediction results should be viewed as evidence for model relatedness or as a screening signal for further analysis, rather than as a definitive provenance or infringement judgment.

\paragraph{Sample Sensitivity.} 
ABLE is computed on a finite probing corpus, so the resulting representation may depend on the choice of probing samples. 
However, as shown in Appendix~\ref{sec:robustness}, ABLE embeddings exhibit high stability across different random subsamples of the probing dataset, suggesting that the representation captures robust model-level patterns rather than being dominated by individual inputs.

\paragraph{Alignment Granularity.} 
Our alignment procedure uniformly distributes token attributions across character spans before aggregating them into word-level scores. 
This simple strategy is effective for the English multiple-choice benchmarks evaluated in this work, but it may lose subword-level information. 
Alternative schemes, such as byte-level alignment, morphology-aware segmentation, or task-specific alignment strategies, may be more suitable for code, multilingual inputs, or domains with non-whitespace tokenization. 
We leave a systematic study of alignment strategies to future work.

\section*{Acknowledgement}
This work was supported by the Guangdong Basic and Applied Basic Research Foundation (Grant No. 2026A1515011579), the HKUST-HKUST(GZ) 1+1+1 Joint Funding Program (Grant No. C\_2025\_031), and the Guangzhou-HKUST(GZ) Joint Funding Program (Grant No. 2023A03J0008), Education Bureau of Guangzhou Municipality. This work was also supported by the Jiangsu Industrial Technology Research Institute (JITRI) and the Wuxi National High-Tech District (WND).

\bibliography{custom}

\appendix

\section{Theoretical Analysis}
\label{sec:theory}

In this section, we provide a theoretical justification that ABLE constitutes a stable attribution-based embedding of large language models. Our analysis abstracts away implementation details and focuses on the relationship between model parameters and attribution-based representations. Section A.1 defines ABLE. Section A.2 establishes that ABLE is a stable feature map induced by model parameters. Section A.3 proves that the low-dimensional embedding preserves inter-model distances via random projection. Section A.4 provides finite-sample concentration guarantees.

\subsection{Preliminaries and Definitions}

Let a language model be parameterized by $\theta \in \mathbb{R}^P$.  
For an input representation $x \in \mathbb{R}^d$ and an output class $y$, denote the log-probability function by
\[
s_\theta^{(y)}(x) \triangleq \log p_\theta(y \mid x),
\]
which we assume to be differentiable with respect to $x$ almost everywhere.

\paragraph{Attribution-based sensitivity.}
For each input $x$, we define the attribution score as the inner product between the input and its gradient:
\[
a_\theta(x) \triangleq \langle x, \nabla_x s_\theta(x) \rangle \in \mathbb{R},
\]
where $s_\theta(x)$ denotes the log-probability function stacking $s_\theta^{(y)}(x)$ over output classes.

\paragraph{Model representation.}
Let $\phi : \mathbb{R}^d \to \mathbb{R}^D$ be a deterministic aggregation function.  
Given a probe distribution $\mathcal{D}$ over inputs, we define the (pre-projection) ABLE representation as
\[
\Phi(\theta) \triangleq \mathbb{E}_{x \sim \mathcal{D}} \big[ \phi(a_\theta(x)) \big] \in \mathbb{R}^D.
\]
Finally, the ABLE embedding is obtained via a random projection
\[
\Psi(\theta) \triangleq \mathbf{R} \, \Phi(\theta) \in \mathbb{R}^K,
\]
where $\mathbf{R} \in \mathbb{R}^{K \times D}$ is a Johnson--Lindenstrauss random matrix.

\subsection{The Parameter Stability of ABLE}

The objective of this section is to demonstrate the parameter stability of ABLE, meaning that the embedding varies continuously with changes in model parameters. Our main line of reasoning is to first prove that ABLE is a Lipschitz continuous mapping under standard regularity assumptions, and subsequently demonstrate that Transformer-based LLMs satisfy these assumptions.

\begin{theorem}[Parameter Stability of ABLE]
\label{thm:lipschitz}
Assume the following regularity conditions hold:
\begin{enumerate}
    \item The gradient field of the log-probability function, $\nabla_x s_\theta(x)$, is $M$-Lipschitz continuous in $\theta$: $\|\nabla_x s_\theta(x) - \nabla_x s_{\theta'}(x)\|_2 \le M \|\theta - \theta'\|_2$.
    \item For all $x$ in the support of $\mathcal{D}$, $s_\theta(x)$ is differentiable with respect to $x$ almost everywhere.
    \item The aggregation function $\phi$ is $L_\phi$-Lipschitz, and inputs are bounded on average: $\mathbb{E}_{x \sim \mathcal{D}} \|x\|_2 \le B$.
\end{enumerate}
Then the ABLE mapping $\Phi : \theta \mapsto \Phi(\theta)$ is Lipschitz continuous:
\[
\| \Phi(\theta) - \Phi(\theta') \|_2 \le L \, \| \theta - \theta' \|_2,
\]
where $L = B M L_\phi$.
\end{theorem}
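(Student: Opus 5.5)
The proof is a direct chain of estimates: move the difference inside the expectation, apply the Lipschitz property of $\phi$, then Cauchy--Schwarz on the Gradient $\times$ Input score, and finally the Lipschitz property of the gradient field in $\theta$. Fix $\theta,\theta'$. By linearity of expectation and Jensen's inequality for the convex function $\|\cdot\|_2$,
\[
\|\Phi(\theta)-\Phi(\theta')\|_2 = \bigl\|\mathbb{E}_{x\sim\mathcal{D}}[\phi(a_\theta(x))-\phi(a_{\theta'}(x))]\bigr\|_2 \le \mathbb{E}_{x\sim\mathcal{D}}\bigl\|\phi(a_\theta(x))-\phi(a_{\theta'}(x))\bigr\|_2 .
\]
This step requires the expectations to be finite. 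I will note that integrability follows from the same bounds derived below, together with finiteness of $\Phi$ at one parameter value, which is implicit in the definition of $\Phi$.

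Next, condition 3 gives pointwise $\|\phi(a_\theta(x))-\phi(a_{\theta'}(x))\| \le L_\phi\,|a_\theta(x)-a_{\theta'}(x)|$. If $s_\theta$ stacks several classes, the attribution is vector-valued and the absolute value becomes a norm; the argument is unchanged. By condition 2 the gradients exist for $\mathcal{D}$-almost every $x$, which suffices inside the expectation. For such $x$, bilinearity and Cauchy--Schwarz give
\[
|a_\theta(x)-a_{\theta'}(x)| = \bigl|\langle x, \nabla_x s_\theta(x)-\nabla_x s_{\theta'}(x)\rangle\bigr| \le \|x\|_2\,\bigl\|\nabla_x s_\theta(x)-\nabla_x s_{\theta'}(x)\bigr\|_2 \le M\|x\|_2\,\|\theta-\theta'\|_2 ,
\]
where the last inequality is condition 1. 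In the multi-class case I read $\|\cdot\|_2$ as the operator norm of the Jacobian difference, so the inequality still holds.

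Taking expectations and using $\mathbb{E}\|x\|_2\le B$ yields $\|\Phi(\theta)-\Phi(\theta')\|_2 \le L_\phi M B\,\|\theta-\theta'\|_2$, which is the claim with $L=BML_\phi$.

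None of these steps is a genuine obstacle. The only points needing care are the measure-theoretic ones: justifying the interchange in the first display, and handling the almost-everywhere differentiability. For the latter, I will define $a_\theta$ arbitrarily on the null set where the gradient fails to exist; this does not change any expectation.
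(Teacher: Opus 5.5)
Your proposal is correct and is essentially the paper's proof: Cauchy--Schwarz on the Gradient $\times$ Input difference together with condition~1 gives $|a_\theta(x)-a_{\theta'}(x)|\le M\|x\|_2\|\theta-\theta'\|_2$, and the Lipschitz property of $\phi$ together with $\mathbb{E}_{x\sim\mathcal{D}}\|x\|_2\le B$ then finishes the bound. Your explicit Jensen step and your remarks on integrability and almost-everywhere differentiability make precise what the paper does implicitly when it writes $\|\Phi(\theta)-\Phi(\theta')\|_2\le \mathbb{E}[L_\phi|a_\theta(x)-a_{\theta'}(x)|]$.
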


\begin{proof}
Using the Cauchy-Schwarz inequality and the assumption on $\nabla_x s_\theta(x)$:
\begin{align*}
| a_\theta(x) - a_{\theta'}(x) | &= |\langle x, \nabla_x s_\theta(x) - \nabla_x s_{\theta'}(x) \rangle| \\
&\le \|x\|_2 \|\nabla_x s_\theta(x) - \nabla_x s_{\theta'}(x)\|_2 \\
&\le \|x\|_2 M \| \theta - \theta' \|_2.
\end{align*}
Applying the Lipschitz continuity of $\phi$ and taking expectation over $x \sim \mathcal{D}$:
\begin{align*}
&\| \Phi(\theta) - \Phi(\theta') \|_2 \\
&\le \mathbb{E}_{x \sim \mathcal{D}}[ L_\phi | a_\theta(x) - a_{\theta'}(x) | ] \\
&\le L_\phi \mathbb{E}_{x \sim \mathcal{D}}[\|x\|_2] \cdot M \cdot \| \theta - \theta' \|_2 \\
&\le B M L_\phi \| \theta - \theta' \|_2.
\end{align*}
\renewcommand{\qedsymbol}{}
\end{proof}

Theorem~\ref{thm:lipschitz} provides the formal guarantee for ABLE's parameter stability. The Lipschitz inequality $\|\Phi(\theta) - \Phi(\theta')\|_2 \le L \|\theta - \theta'\|_2$ ensures that the mapping does not diverge: finite differences in model parameters result in strictly bounded differences in the embedding space. This property is crucial for a reliable representation, as it guarantees that models with similar internal mechanisms (proximal parameters) are mapped to nearby points in the vector space, preventing chaotic behavior where minor weight variations could lead to disparate embeddings.

The validity of Theorem~\ref{thm:lipschitz} relies on three assumptions. Assumptions 2 and 3 are satisfied by design in our setting:
\begin{itemize}
    \item \textbf{Assumption 2 (Differentiability):} While text is discrete, the model operates on continuous word embeddings. Standard embedding lookup layers and subsequent smooth transformations ensure differentiability with respect to the input representation $x$ almost everywhere.
    \item \textbf{Assumption 3 (Boundedness):} The aggregation function $\phi$ is linear (averaging) and thus 1-Lipschitz. Input embeddings belong to a finite vocabulary set $\mathcal{V}$, so $\|x\|_2$ is strictly bounded by $B = \max_{v \in \mathcal{V}} \|v\|_2$.
\end{itemize}
The remaining Assumption 1, which requires the model's gradient field to be Lipschitz continuous with respect to its parameters, is the most non-trivial condition. We now demonstrate that this assumption holds for Transformer-based Large Language Models under standard architectural constraints.

\begin{proposition}[Regularity of Transformer Gradient Fields]
\label{prop:transformer_regularity}
Consider a Transformer-based language model with the following properties:
\begin{enumerate}
    \item Activation functions (e.g., GELU, Swish) have bounded first and second derivatives: $|\sigma'(z)| \le L_\sigma$ and $|\sigma''(z)| \le L_{\sigma'}$ for all $z$.
    \item Token embeddings are drawn from a finite vocabulary with bounded norms: $\|x\|_2 \le B_x$.
    \item The network has finite depth $L$ and bounded weight matrices: $\|W_l\|_2 \le B_W$ for all layers $l$.
\end{enumerate}
Then the gradient field $\nabla_x s_\theta(x)$ is $M$-Lipschitz continuous in $\theta$ for some constant $M > 0$.
\end{proposition}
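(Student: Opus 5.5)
We view the network as a composition of $L$ layer maps, $h_0 = x$, $h_l = f_l(h_{l-1}; W_l)$, followed by a readout $s_\theta = g(h_L; W_{\text{out}})$. We take each $f_l$ to be a finite composition of linear maps, the activations $\sigma$, residual additions, and (smoothly regularized) normalization/softmax blocks. By the chain rule,
\[
\nabla_x s_\theta(x) = J_1(\theta,x)^\top J_2(\theta,x)^\top \cdots J_L(\theta,x)^\top \nabla_{h_L} g,
\]
where $J_l = \partial h_l/\partial h_{l-1}$. The plan is to show two things: each factor is uniformly bounded in operator norm, and each factor is Lipschitz in $\theta$. A telescoping product argument then yields the claimed constant $M$.

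\textbf{Step 1 (forward boundedness).} By induction on $l$, show $\|h_l\|_2 \le C_l$ uniformly over the admissible parameter set $\Theta=\{\theta:\|W_l\|_2\le B_W\}$ and over $\|x\|_2\le B_x$. For a linear map with activation we use $\|\sigma(Wh)\|\le |\sigma(0)|\sqrt{n} + L_\sigma B_W\|h\|$, which gives $C_l = O\big((L_\sigma B_W)^l B_x\big)$ plus constants.

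\textbf{Step 2 (Jacobian bounds).} Each $J_l$ is a product of weight matrices and diagonal matrices $\mathrm{diag}(\sigma'(\cdot))$. Hence $\|J_l\|_2 \le L_\sigma B_W$, plus $1$ for the residual branch, and the whole product is bounded by $\prod_l \|J_l\|$.

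\textbf{Step 3 (parameter-Lipschitzness of each factor).} Write $J_l = W_l^\top \mathrm{diag}(\sigma'(W_l h_{l-1}))$ schematically. For two parameter vectors $\theta,\theta'$, split
\[
J_l - J_l' = (W_l - W_l')^\top D + W_l'^\top (D - D').
\]
Bound $\|D-D'\|$ using the mean value theorem: $\|D-D'\|\le L_{\sigma'}\|W_l h_{l-1} - W_l' h'_{l-1}\|$. The right-hand side is controlled by $\|W_l-W_l'\|\,C_{l-1} + B_W\|h_{l-1}-h'_{l-1}\|$. The hidden-state differences are themselves Lipschitz in $\theta$ by the same induction as in Step 1, with constant built from $L_\sigma$, $B_W$ and $C_{l-1}$. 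The first-order activation bound enters through the hidden-state Lipschitzness, and the second-derivative bound $L_{\sigma'}$ enters exactly here.

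\textbf{Step 4 (telescoping).} Apply the identity
\[
\prod_l A_l - \prod_l A'_l = \sum_k \Big(\prod_{l<k}A'_l\Big)(A_k - A'_k)\Big(\prod_{l>k}A_l\Big),
\]
together with Steps 2 and 3, to obtain $\|\nabla_x s_\theta - \nabla_x s_{\theta'}\|_2 \le M\|\theta-\theta'\|_2$. Here $M$ is a polynomial in $L_\sigma, L_{\sigma'}, B_W, B_x$ that grows at most exponentially in $L$. Operator-norm differences of the weights are bounded by $\|\theta-\theta'\|_2$, since the spectral norm is at most the Frobenius norm.

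\textbf{Main obstacle.} The hard part is the non-elementwise blocks: attention softmax and LayerNorm/RMSNorm. Hypotheses 1--3 do not literally cover them. For softmax, I would use that it is smooth with Jacobian and Hessian bounded by absolute constants. Its inputs $QK^\top/\sqrt{d}$ are bounded by Step 1, so it behaves like a bounded-$C^2$ activation; one must also handle the bilinear dependence on $h$ through $Q$ and $K$, and this is still fine on bounded sets. For normalization, the map $h\mapsto h/\sqrt{\|h\|^2/n+\epsilon}$ is $C^2$ with derivatives bounded in terms of $\epsilon^{-1/2}$ and $\epsilon^{-1}$, thanks to the stabilizer $\epsilon>0$. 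I would state these as implicit standard assumptions, then fold them into the per-layer constants. Everything is then uniform because $\Theta$ and the input set are bounded.
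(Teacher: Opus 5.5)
Your proposal is correct and follows the same route as the paper. Both arguments bound each layer's input Jacobian and show it is Lipschitz in the layer weights via the split $(W-W')^\top D + W'^\top(D-D')$, with $L_{\sigma'}$ controlling $D-D'$, and then pass to the chain-rule product of Jacobians by a perturbation argument. Your version is more careful than the paper's in three ways: you propagate the $\theta$-dependence of the hidden states $h_{l-1}$ through a forward-boundedness induction, where the paper simply reuses the raw input bound $B_x$ at every layer; you write out the telescoping identity that the paper only alludes to; and you make explicit the extra regularity for softmax and for normalization (the stabilizer $\epsilon>0$), which hypotheses 1--3 do not literally cover.
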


\begin{proof}
We proceed by analyzing the Lipschitz dependence layer by layer.

\textit{Step 1: Linear layer.} For $y = Wx$, the input gradient is $\nabla_x y = W^T$. For two parameter configurations $W, W'$:
\begin{align*}
\|\nabla_x y_W - \nabla_x y_{W'}\|_2 &= \|W^T - W'^T\|_2 \\
&= \|W - W'\|_2,
\end{align*}
which is 1-Lipschitz in $W$.

\textit{Step 2: Layer with activation.} For $y = \sigma(Wx)$, the chain rule gives $\nabla_x y = W^T \cdot \mathrm{diag}(\sigma'(Wx))$. When $W$ changes to $W'$:
\begin{align*}
&\|\nabla_x y_W - \nabla_x y_{W'}\|_2 \\
&\le \|W^T - W'^T\|_2 \cdot \|\mathrm{diag}(\sigma'(Wx))\|_2 \\
&\quad + \|W'^T\|_2 \cdot \|\sigma'(Wx) - \sigma'(W'x)\|_2 \\
&\le L_\sigma \|W - W'\|_2 \\
&\quad + B_W L_{\sigma'} \|x\|_2 \|W - W'\|_2 \\
&\le (L_\sigma + B_W L_{\sigma'} B_x) \|W - W'\|_2.
\end{align*}

\textit{Step 3: Multi-layer composition.} For an $L$-layer network $f = f_L \circ \cdots \circ f_1$, the chain rule yields $\nabla_x f = \prod_{l=1}^{L} J_l$, where $J_l$ is the Jacobian of layer $l$. Since each $J_l$ is Lipschitz in $\theta_l$ with bounded spectral norm (due to bounded weights and the smoothness of softmax attention), perturbation analysis shows that the product $\nabla_x f$ remains Lipschitz in $\theta$.

For Transformer architectures specifically, self-attention layers use softmax, whose Jacobian satisfies $\|\partial \mathrm{softmax} / \partial z\|_2 \le 1$. Combined with bounded query, key, and value projections, each attention layer contributes a bounded Lipschitz factor. Aggregating across all $L$ layers yields the global constant $M$.
\renewcommand{\qedsymbol}{}
\end{proof}

Consequently, ABLE constitutes a stable feature map as implied by its Lipschitz continuity, where small perturbations in model parameters result in bounded changes in the representation space. It is worth noting that this guarantee fundamentally relies on differentiability; architectures employing non-differentiable components (e.g., hard attention) may require alternative theoretical analysis.

\subsection{Distance Preservation via Random Projection}

We next show that the final ABLE embedding preserves inter-model geometry up to a small distortion.

\begin{theorem}[Johnson--Lindenstrauss Property]
\label{thm:jl}
Let $\mathcal{F} = \{\theta_1, \dots, \theta_m\}$ be a finite set of models and let $0 < \varepsilon < 1$.  
If the projection dimension satisfies $K = O(\varepsilon^{-2} \log m)$, then with high probability, for all $i,j \in \{1,\dots,m\}$, the projected distance $\|\Psi(\theta_i) - \Psi(\theta_j)\|_2$ lies within $(1 \pm \varepsilon)$ of the original distance $\|\Phi(\theta_i) - \Phi(\theta_j)\|_2$.
\end{theorem}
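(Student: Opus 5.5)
The plan is the standard Gaussian Johnson--Lindenstrauss argument, applied to the finite set of pre-projection representations $\{\Phi(\theta_1),\dots,\Phi(\theta_m)\}\subset\mathbb{R}^D$. The key structural fact is that $\Psi = \mathbf{R}\Phi$ is linear in $\Phi$, so $\Psi(\theta_i)-\Psi(\theta_j) = \mathbf{R}\,u_{ij}$ with $u_{ij} \triangleq \Phi(\theta_i)-\Phi(\theta_j)$. It therefore suffices to show that, with high probability, $\mathbf{R}$ nearly preserves the norm of each of the $\binom{m}{2}$ fixed vectors $u_{ij}$ simultaneously. Pairs with $u_{ij}=0$ are trivially preserved, since then $\mathbf{R}u_{ij}=0$. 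The random matrix $\mathbf{R}$ must be drawn independently of the models, as it is in the construction of Section~\ref{subsec:jl}; the vectors $u_{ij}$ are then deterministic.

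\textbf{Step 1 (distribution of a single projected vector).} Fix a nonzero $u\in\mathbb{R}^D$. Each coordinate $(\mathbf{R}u)_k=\sum_l R_{kl}u_l$ is a sum of independent Gaussians, hence $(\mathbf{R}u)_k\sim\mathcal{N}(0,\|u\|_2^2/K)$. Distinct rows of $\mathbf{R}$ are independent, so the coordinates are independent. Consequently
\[
Z \triangleq K\|\mathbf{R}u\|_2^2/\|u\|_2^2 \sim \chi^2_K ,
\]
with $\mathbb{E}\|\mathbf{R}u\|_2^2=\|u\|_2^2$.

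\textbf{Step 2 (concentration of $\chi^2_K$).} This is the main technical step. I will use the Chernoff/moment-generating-function argument. For $0<\lambda<1/2$, $\mathbb{E}e^{\lambda Z}=(1-2\lambda)^{-K/2}$. Markov's inequality, optimized at $\lambda=\varepsilon/(2(1+\varepsilon))$, gives
\[
\Pr[Z\ge(1+\varepsilon)K]\le\big((1+\varepsilon)e^{-\varepsilon}\big)^{K/2}.
\]
The lower tail is handled symmetrically with $-\lambda$. Then I will use $\log(1+\varepsilon)\le\varepsilon-\varepsilon^2/2+\varepsilon^3/3$, valid for $0<\varepsilon<1$, to obtain
\[
\Pr\big[\,|Z/K-1|>\varepsilon\,\big]\le 2\exp\!\big(-K(\varepsilon^2/2-\varepsilon^3/3)/2\big)\le 2e^{-K\varepsilon^2/12}.
\]
The main obstacle is getting these elementary logarithmic inequalities and constants right. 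Any constant $c$ in the final bound $2e^{-cK\varepsilon^2}$ is acceptable, since the statement only asserts $K=O(\varepsilon^{-2}\log m)$.

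\textbf{Step 3 (union bound and conversion to distances).} I apply Step 2 to every $u_{ij}$ and take a union bound over the at most $m^2/2$ pairs. The probability that any pair fails is at most $m^2 e^{-cK\varepsilon^2}$. Choosing $K\ge c^{-1}\varepsilon^{-2}(2\log m+\log(1/\delta))$ makes this at most $\delta$, which is of the form $K=O(\varepsilon^{-2}\log m)$ for any fixed failure probability $\delta$, or for $\delta=1/m$. On the complementary event, every pair satisfies $(1-\varepsilon)\|u_{ij}\|_2^2\le\|\mathbf{R}u_{ij}\|_2^2\le(1+\varepsilon)\|u_{ij}\|_2^2$. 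Taking square roots and using $\sqrt{1+\varepsilon}\le1+\varepsilon$ and $\sqrt{1-\varepsilon}\ge1-\varepsilon$ gives
\[
\|\Psi(\theta_i)-\Psi(\theta_j)\|_2\in(1\pm\varepsilon)\,\|\Phi(\theta_i)-\Phi(\theta_j)\|_2 ,
\]
which is the claimed conclusion.
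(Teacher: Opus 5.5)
Your proposal is correct and follows essentially the same route as the paper: each coordinate of $\mathbf{R}u_{ij}$ is $\mathcal{N}(0,\|u_{ij}\|_2^2/K)$, so $K\|\mathbf{R}u_{ij}\|_2^2/\|u_{ij}\|_2^2\sim\chi^2_K$, an MGF tail bound gives $2e^{-cK\varepsilon^2}$ per pair, and a union bound over fewer than $m^2$ pairs yields $K=O(\varepsilon^{-2}\log m)$. Your version is slightly more careful than the paper's in two places. Your constant $c=1/12$ is justified, whereas the paper's upper-tail bound $e^{-K\varepsilon^2/4}$ fails for large $K$, since the exact exponent is $\tfrac{K}{2}(\varepsilon-\log(1+\varepsilon))<K\varepsilon^2/4$. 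You also make explicit the passage from squared norms to norms via $\sqrt{1\pm\varepsilon}$, which the paper leaves implicit.
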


\begin{proof}
Let $\Delta_{ij} = \Phi(\theta_i) - \Phi(\theta_j)$ be the difference vector between any two models. Only the linear projection $\Psi(\theta) = \mathbf{R}\Phi(\theta)$ comprises random variables, where entries of $\mathbf{R} \in \mathbb{R}^{K \times D}$ are i.i.d. Gaussian $\mathcal{N}(0, 1/K)$.

Consider a single pair with difference vector $v = \Delta_{ij}$. We are interested in the distribution of $\|\mathbf{R}v\|^2$. Let $R_k$ be the $k$-th row of $\mathbf{R}$. Then the $k$-th component of the projected vector is $y_k = \langle R_k, v \rangle$. Since $R_k \sim \mathcal{N}(0, \frac{1}{K}I)$, the linear combination $y_k$ is essentially a univariate Gaussian:
\[
y_k \sim \mathcal{N}\left(0, \frac{\|v\|^2}{K}\right).
\]
Consequently, the squared norm of the projection is:
\[
\|\mathbf{R}v\|^2 = \sum_{k=1}^K y_k^2 = \frac{\|v\|^2}{K} \sum_{k=1}^K Z_k^2,
\]
where $Z_k \sim \mathcal{N}(0,1)$ are standard Gaussian variables. The sum $X = \sum_{k=1}^K Z_k^2$ follows a Chi-squared distribution with $K$ degrees of freedom, denoted as $\chi^2_K$.

We use standard concentration bounds for the Chi-squared distribution (derived from the Moment Generating Function bound). For any $\epsilon \in (0, 1)$:
\begin{align*}
\mathbb{P}(X \ge K(1+\epsilon)) &\le \exp\left(-\frac{K}{4}\epsilon^2\right), \\
\mathbb{P}(X \le K(1-\epsilon)) &\le \exp\left(-\frac{K}{4}\epsilon^2\right).
\end{align*}
Combining these, the probability that the squared length is distorted by more than $\epsilon$ is:
\begin{align*}
\mathbb{P}\Big( \big| \|\mathbf{R}v\|^2 &- \|v\|^2 \big| > \epsilon \|v\|^2 \Big) \\
&= \mathbb{P}\left( \left| \frac{X}{K} - 1 \right| > \epsilon \right) \\
&\le 2\exp\left(-\frac{K \epsilon^2}{4}\right).
\end{align*}
We apply this to the set of all pairwise differences $\mathcal{V} = \{ \Delta_{ij} \mid 1 \le i < j \le m \}$, which has cardinality $|\mathcal{V}| = \binom{m}{2} < m^2$. Using the union bound:
\begin{align*}
P_{\text{fail}} &= \mathbb{P}\Big( \exists i,j : \left| \|\Psi(\theta_i) - \Psi(\theta_j)\|^2 - \|\Delta_{ij}\|^2 \right| \\
&\qquad\qquad > \epsilon \|\Delta_{ij}\|^2 \Big) \\
&\le \sum_{i<j} \mathbb{P}\left( \left| \|\mathbf{R}\Delta_{ij}\|^2 - \|\Delta_{ij}\|^2 \right| > \epsilon\|\Delta_{ij}\|^2 \right) \\
&\le m^2 \cdot 2\exp\left(-\frac{K \epsilon^2}{4}\right).
\end{align*}
To ensure $P_{\text{fail}} \le \delta$, we require:
\[
2m^2 \exp\left(-\frac{K \epsilon^2}{4}\right) \le \delta \implies K \ge \frac{4}{\epsilon^2} \ln \frac{2m^2}{\delta}.
\]
Thus, choosing $K = O(\epsilon^{-2} \log m)$ suffices to preserve all pairwise distances with high probability.
\renewcommand{\qedsymbol}{}
\end{proof}

\begin{figure*}[t]
\centering
\includegraphics[width=\linewidth]{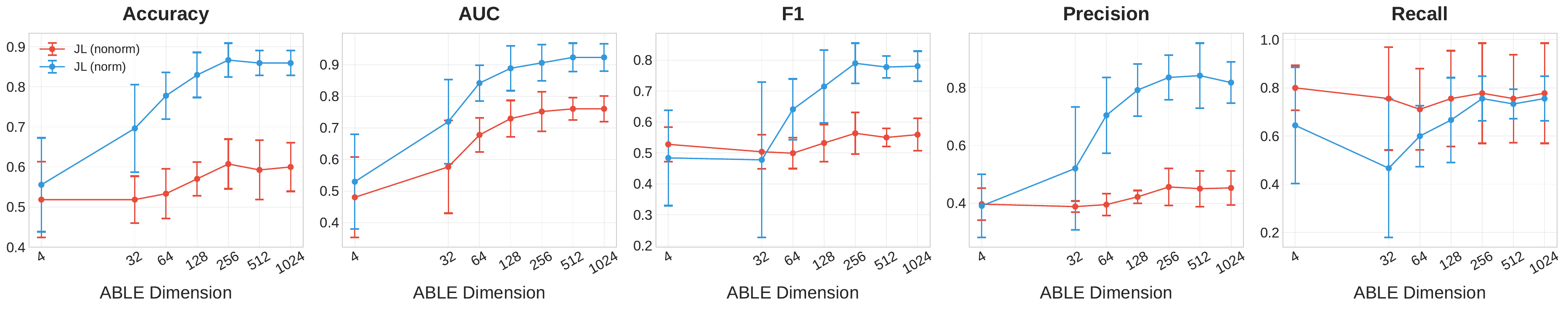}
\caption{Effect of ABLE embedding dimension on relation prediction performance. The blue curve corresponds to applying L2 normalization to each model's feature vector prior to Johnson-Lindenstrauss projection, while the red curve represents the unnormalized variant. Error bars indicate standard deviation computed over 5 independent runs.}
\label{fig:dim_ablation}
\end{figure*}

\paragraph{Consequence.}
Theorem~\ref{thm:jl} implies that ABLE preserves pairwise distances between models up to a controlled distortion, enabling reliable comparison, clustering, and retrieval in a low-dimensional embedding space.

\subsection{Finite-Sample Concentration}

In practice, the ABLE representation is computed from $N$ i.i.d.\ samples rather than the population expectation. We now establish that the empirical estimate concentrates around its expectation, bridging the gap between population-level theory and finite-sample implementation.

\begin{theorem}[Finite-Sample Guarantee]
\label{thm:concentration}
Let $\hat{\Phi}_N(\theta) = \frac{1}{N}\sum_{i=1}^N \phi(a_\theta(x_i))$ be the empirical ABLE representation based on $N$ i.i.d.\ samples from $\mathcal{D}$. 
Assume that $\|\phi(a_\theta(x))\|_\infty \le B_\phi$ for all $x$ in the support of $\mathcal{D}$.
Then for any $\delta \in (0,1)$, with probability at least $1-\delta$:
\[
\|\hat{\Phi}_N(\theta) - \Phi(\theta)\|_2 \le B_\phi \sqrt{\frac{2D \log(2D/\delta)}{N}},
\]
where $D$ is the dimension of the pre-projection representation.
\end{theorem}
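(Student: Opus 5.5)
The plan is a coordinatewise application of Hoeffding's inequality, a union bound over the $D$ coordinates, and a conversion from the $\ell_\infty$ norm to the $\ell_2$ norm. Fix $\theta$ and write $Z_i = \phi(a_\theta(x_i)) \in \mathbb{R}^D$ for $i=1,\dots,N$. Since the $x_i$ are i.i.d.\ draws from $\mathcal{D}$, the $Z_i$ are i.i.d.\ with mean $\mathbb{E}[Z_i] = \Phi(\theta)$. By the assumption $\|\phi(a_\theta(x))\|_\infty \le B_\phi$, every coordinate satisfies $Z_i^{(k)} \in [-B_\phi, B_\phi]$ almost surely. Then $\hat{\Phi}_N(\theta) - \Phi(\theta) = \frac{1}{N}\sum_{i=1}^N (Z_i - \mathbb{E} Z_i)$, so each coordinate is an average of $N$ independent, centered, bounded scalars.

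First, for a fixed coordinate $k \in \{1,\dots,D\}$, I would apply Hoeffding's inequality to variables supported on an interval of length $2B_\phi$:
\[
\mathbb{P}\Big( \big|\hat{\Phi}_N^{(k)}(\theta) - \Phi^{(k)}(\theta)\big| > t \Big) \le 2\exp\!\left(-\frac{2N t^2}{(2B_\phi)^2}\right) = 2\exp\!\left(-\frac{N t^2}{2B_\phi^2}\right).
\]
Second, a union bound over the $D$ coordinates shows that, with probability at least $1 - 2D\exp(-Nt^2/(2B_\phi^2))$, every coordinate deviation is at most $t$. Setting this failure probability equal to $\delta$ gives $t = B_\phi\sqrt{2\log(2D/\delta)/N}$.

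Third, on this event I would pass to the $\ell_2$ norm with the elementary inequality $\|v\|_2 \le \sqrt{D}\,\|v\|_\infty$. This yields
\[
\|\hat{\Phi}_N(\theta) - \Phi(\theta)\|_2 \le \sqrt{D}\, B_\phi \sqrt{\frac{2\log(2D/\delta)}{N}} = B_\phi\sqrt{\frac{2D\log(2D/\delta)}{N}},
\]
which is exactly the claimed bound, with exactly the stated constants.

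The only real care point is bookkeeping of constants. The Hoeffding range must be $2B_\phi$, not $B_\phi$, so that the exponent's factor of $2$ cancels correctly against the $(2B_\phi)^2$ denominator. The union bound must contribute the factor $2D$ that appears inside the logarithm. I expect no genuine obstacle. A vector Hoeffding or McDiarmid argument applied to $\|\cdot\|_2$ directly would avoid the $\sqrt{\log D}$ factor, but it is unnecessary here, because the coordinatewise route reproduces the stated inequality.
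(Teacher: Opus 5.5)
Your proposal is correct and follows the paper's proof: Hoeffding on each coordinate with range $2B_\phi$, giving tail $2\exp(-N\epsilon^2/(2B_\phi^2))$, then a union bound over the $D$ coordinates at level $\delta/D$ each, then $\|\cdot\|_2 \le \sqrt{D}\,\|\cdot\|_\infty$. Your constant bookkeeping reproduces the stated bound exactly.
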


\begin{proof}
For each coordinate $j \in \{1,\ldots,D\}$, the empirical mean $[\hat{\Phi}_N(\theta)]_j = \frac{1}{N}\sum_{i=1}^N [\phi(a_\theta(x_i))]_j$ is an average of $N$ i.i.d.\ bounded random variables with $|[\phi(a_\theta(x))]_j| \le B_\phi$. By Hoeffding's inequality:
\[
\mathbb{P}\big(|[\hat{\Phi}_N(\theta)]_j - [\Phi(\theta)]_j| > \epsilon\big) \le 2\exp\Big(-\frac{N\epsilon^2}{2B_\phi^2}\Big).
\]
Setting the right-hand side to $\delta/D$ and applying a union bound over all $D$ coordinates, we obtain that with probability at least $1-\delta$, $|[\hat{\Phi}_N(\theta)]_j - [\Phi(\theta)]_j| \le B_\phi\sqrt{\frac{2\log(2D/\delta)}{N}}$ for all $j$. The $\ell_2$ bound follows from $\|\cdot\|_2 \le \sqrt{D}\|\cdot\|_\infty$.
\renewcommand{\qedsymbol}{}
\end{proof}

\paragraph{Consequence.}
Theorem~\ref{thm:concentration} quantifies the gap between the theoretical population-level representation $\Phi(\theta)$ and its finite-sample estimate $\hat{\Phi}_N(\theta)$. The bound scales as $O(\sqrt{D\log D / N})$, indicating that the estimation error diminishes with more samples. This theoretical guarantee is empirically substantiated in Appendix~\ref{sec:robustness}. There, we calculate pairwise ABLE distances between models using two disjoint subsets of the probing data. The strong correlation observed between these two sets of distances indicates that the finite sample size is sufficient to achieve stable representations, thereby validating the convergence predicted by the concentration bound.

\section{Ablation Study of ABLE Representation Dimensionality}
\label{sec:appendixb}
This ablation study examines how the ABLE embedding dimension $d$ affects representation quality. We vary $d$ across \{4, 32, 64, 128, 256, 512, 1024\} and evaluate each setting on the relation prediction task described in \S\ref{subsec:relation}. We report Accuracy, AUC, F1, Precision, and Recall as evaluation metrics.

As shown in Fig.~\ref{fig:dim_ablation}, representation quality improves as $d$ increases and converges around $d=256$. Very low dimensions (e.g., $d=4$) yield poor performance because excessive compression discards structural information from the original high-dimensional space. Conversely, dimensions beyond 256 offer only marginal gains while incurring higher computational and storage costs. This trade-off between quality and efficiency motivates our choice of $d=256$ for all experiments.

We also compare two variants: with and without L2 normalization of each model's feature vector prior to JL projection. Normalization rescales each feature vector to unit length, i.e., $\mathbf{a}'_m = \mathbf{a}_m / \|\mathbf{a}_m\|_2$. As shown in Fig.~\ref{fig:dim_ablation}, the normalized variant (blue) generally outperforms the unnormalized variant (red) across most metrics. This suggests that normalization balances the scale across dimensions, enabling the distance metric to better capture directional differences between models.

\section{Empirical Validation via Model Interpolation}
\label{sec:appendixc}
The theoretical analysis in Appendix~\ref{sec:theory} establishes that ABLE is a Lipschitz-continuous mapping from model parameters to the embedding space: models with similar parameters should yield similar ABLE embeddings. To empirically validate this correspondence on real nonlinear Transformers, we conduct a model interpolation experiment.

We select three Llama-2-7B variants sharing the same architecture as anchor models: the base model ($W_0$), Llama-2-7b-chat-hf ($W_1$), and Vicuna-7b-v1.5 ($W_2$). We construct 25 merged models via linear weight interpolation:
\[
W_{\alpha,\beta} = (1 - \alpha - \beta) W_0 + \alpha W_1 + \beta W_2,
\]
where $\alpha, \beta \in \{0, 0.25, 0.5, 0.75, 1.0\}$. In weight space, we reconstruct the 2D geometric structure using inner products among the three anchor models; in ABLE space, we apply Principal Component Analysis (PCA) to the low-dimensional ABLE features for 2D visualization.

\begin{figure}[t]
\centering
\includegraphics[width=\columnwidth]{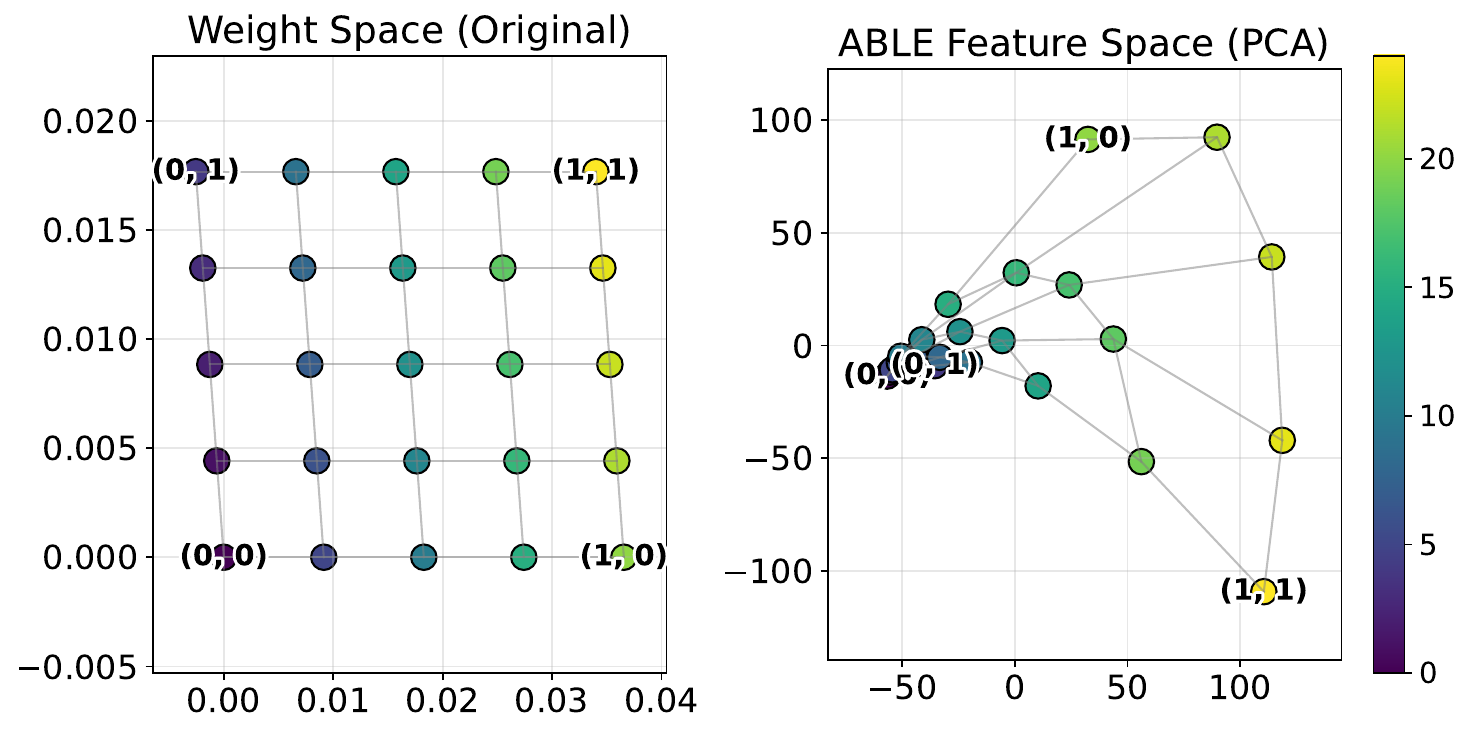}
\caption{Comparison of weight space and ABLE feature space. Left: weight space reconstructed via inner products. Right: ABLE feature space after PCA reduction. ABLE space preserves the topological structure of weight space but exhibits non-linear distortion. Notably, (0,0) and (0,1) are very close in ABLE space, indicating that Vicuna and the base model exhibit similar behavior, consistent with Vicuna's relatively small fine-tuning magnitude.}
\label{fig:interpolation}
\end{figure}

\begin{table*}[t]
\centering
\caption{Comparison of runtime and relation prediction accuracy. Relation prediction accuracy is reported as mean $\pm$ standard deviation over 5 random seeds.}
\label{tab:runtime}
\begin{tabular}{lcc}
\toprule
\textbf{Method} & \textbf{Total Time (h)} & \textbf{Accuracy on Relation Prediction} \\
\midrule
Log-Likelihood (Output) & 6.5 & 0.721$\pm$0.041 \\
PhyloLM (Output) & 9.8 & 0.704$\pm$0.141 \\
\textbf{ABLE (Ours)} & 18.5 & \textbf{0.867$\pm$0.042} \\
\bottomrule
\end{tabular}
\end{table*}

\begin{figure}[t]
\centering
\includegraphics[width=\columnwidth]{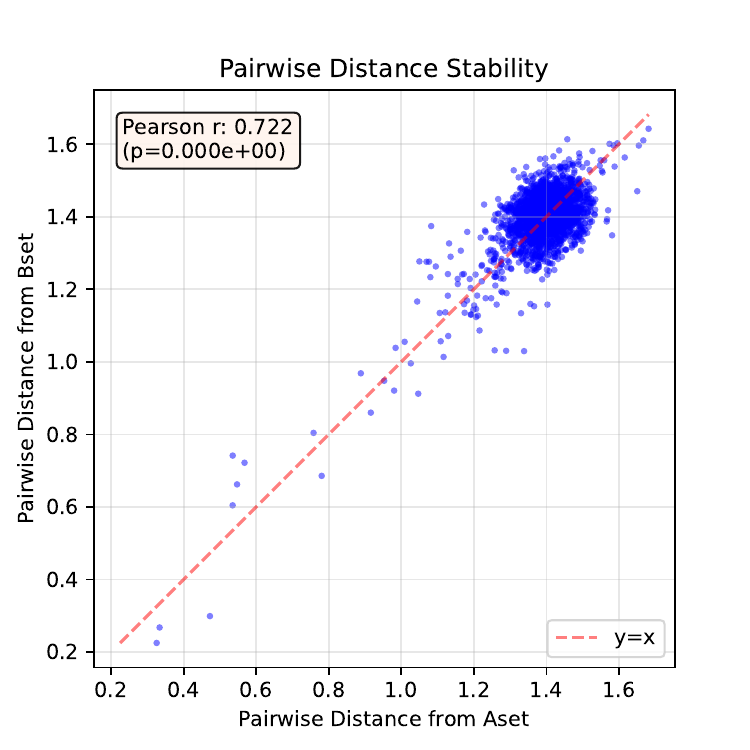}
\caption{Robustness test for ABLE embeddings. Each point corresponds to a model pair, with x-axis showing the pairwise distance computed from Subset A and y-axis from Subset B.}
\label{fig:robustness}
\end{figure}

As shown in Fig.~\ref{fig:interpolation}, a notable observation is that (0,0) and (0,1) are very close in ABLE space, while (0,0) and (1,0) are much farther apart. This indicates that Vicuna ($W_2$) exhibits higher behavioral similarity to the base model than Chat ($W_1$) does. This observation is consistent with the weight space structure: inner products computed directly from model weights show $\|W_2-W_0\|^2 \approx 0.00032$, whereas $\|W_1-W_0\|^2 \approx 0.00134$. Vicuna's fine-tuning magnitude is approximately one quarter that of Chat, demonstrating that ABLE distance reflects the similarity structure in parameter space.

Examining the overall geometry of both spaces, weight space exhibits a regular parallelogram grid reflecting the geometry of linear interpolation. Although ABLE feature space displays irregular distortion, it crucially preserves the topological structure consistent with weight space: (1) adjacent models remain neighbors in both spaces; (2) grid edges do not intersect, indicating that local neighborhood structure is preserved; (3) color gradients of model indices show similar transition patterns across both spaces.

This result validates the theoretical prediction at two levels. The topological preservation confirms the continuity of the ABLE mapping: models close in parameter space remain close in ABLE space. The geometric distortion reveals non-linear effects present in Transformers, indicating that ABLE sensitivity to parameter changes varies across different regions. This is consistent with the Lipschitz continuity established in Appendix~\ref{sec:theory}: ABLE guarantees bounded propagation of model similarity without requiring strict linear correspondence.

\section{Robustness to Sample Selection}
\label{sec:robustness}
To evaluate the robustness of ABLE to probing sample selection, we conduct a Mantel test. Specifically, we randomly partition the probing dataset into two non-overlapping subsets of equal size, and independently compute ABLE embeddings for the same set of models on each subset. We then compute the Euclidean distance between all pairs of models based on embeddings from each subset, yielding two sets of pairwise distances.

As shown in Fig.~\ref{fig:robustness}, the two sets of distances exhibit a strong positive correlation (Pearson $r = 0.722$, $p < 0.001$), with points clustering tightly around the $y = x$ diagonal. This result indicates that the inter-model distance structure captured by ABLE embeddings remains consistent across different probing samples, demonstrating that ABLE captures intrinsic behavioral characteristics of models rather than patterns dependent on specific probing inputs.

\paragraph{Leave-target-benchmark-out analysis.}
Four benchmark-score prediction targets---ARC, HellaSwag, MMLU, and WinoGrande---also contribute 200 questions each to the probing corpus. For each target, we reconstruct ABLE after removing its corresponding probing questions and repeat the same leave-one-model-out ridge-regression evaluation using the remaining 1,000 questions. 
The maximum absolute changes are 0.0230 for Pearson $r$ and 0.0237 for Spearman $\rho$ (Table \ref{tab:leave_target_out}). These results indicate that benchmark-score prediction is not primarily driven by direct probe--target overlap. Since all probing datasets use a multiple-choice format, this analysis does not establish robustness across substantially different task formats.

\begin{table}[t]
\centering
\small
\setlength{\tabcolsep}{3.5pt}
\caption{Benchmark-score prediction after removing probing questions
from the target benchmark.}
\label{tab:leave_target_out}
\begin{tabular}{lcccc}
\toprule
& \multicolumn{2}{c}{Pearson $r$}
& \multicolumn{2}{c}{Spearman $\rho$} \\
\cmidrule(lr){2-3}\cmidrule(lr){4-5}
\textbf{Target} & \textbf{Full} & \textbf{Removed}
& \textbf{Full} & \textbf{Removed} \\
\midrule
ARC        & 0.8781 & 0.8768 & 0.8898 & 0.8898 \\
HellaSwag  & 0.7764 & 0.7592 & 0.8157 & 0.8050 \\
MMLU       & 0.8554 & 0.8578 & 0.8637 & 0.8658 \\
WinoGrande & 0.8374 & 0.8144 & 0.8602 & 0.8365 \\
\bottomrule
\end{tabular}
\end{table}

\section{Computational Cost}
\label{sec:runtime}

We benchmark the computational time for 239 models using a server equipped with 8 $\times$ NVIDIA GeForce RTX 4090 GPUs. As shown in Table \ref{tab:runtime}, calculating ABLE features for the entire model set took 18.5 hours. While this involves a backward pass, the absolute time cost remains low and entirely feasible for large-scale auditing.

\begin{figure}[!ht]
\centering
\includegraphics[width=0.9\columnwidth]{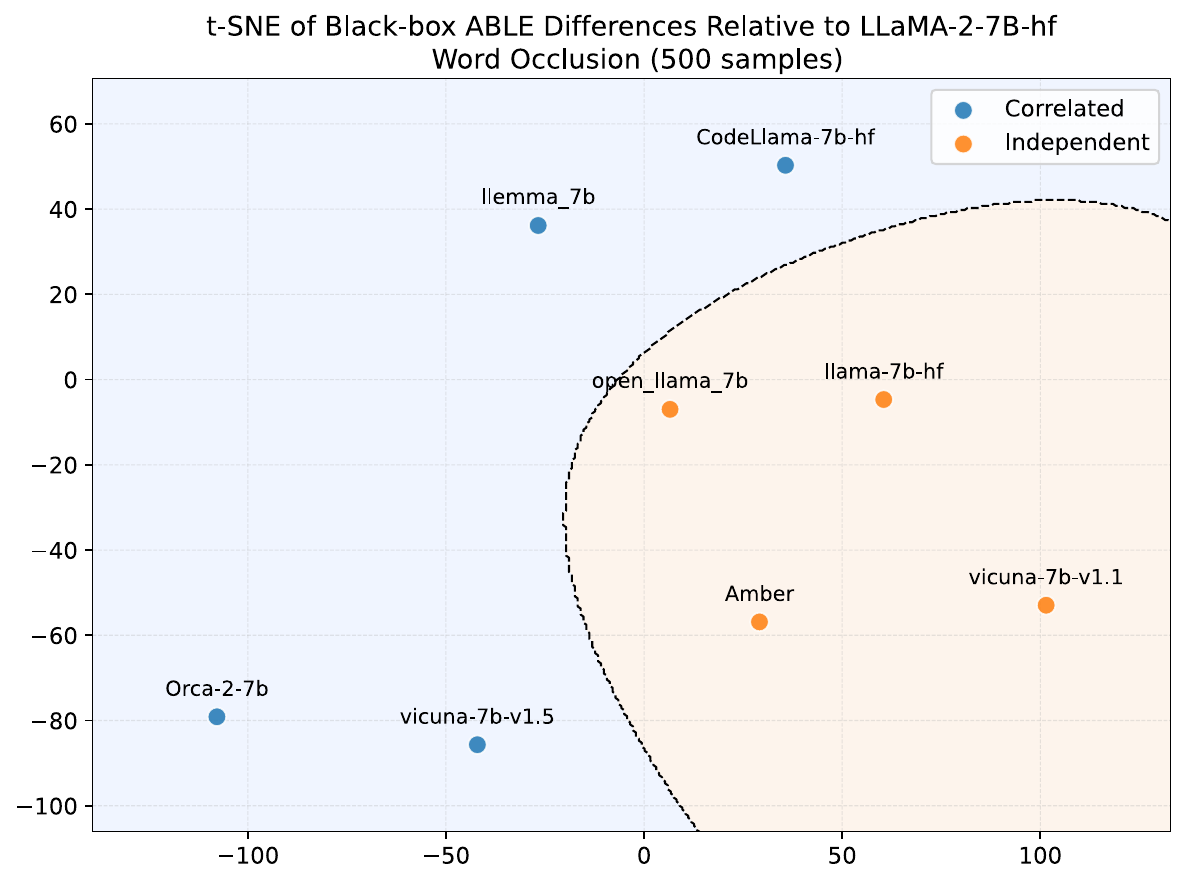}
\caption{Llama-2 family models in Black-box attribution method-based ABLE space, showing separation between annotated correlated and independent models
}
\label{fig:llama2_blackbox_500}
\end{figure}

Furthermore, we emphasize that this extraction is a one-time investment. Once the compact ABLE embedding is computed, it can be reused indefinitely for various downstream tasks without reloading the model weights. The increase in extraction time is justified by the significant performance gains (up to 23.1\% relative improvement in accuracy) achieved in relation prediction and other downstream applications.

\section{Perturbation-Based Attribution Validation on Llama-2 Family}
\label{sec:blackbox_attr}
To examine the attribution method flexibility of ABLE, we conduct a validation using perturbation-based black-box attributions on the Llama-2 family. 
Specifically, we use word-deletion occlusion, where token importance is estimated by measuring the change in answer-option log-probability after deleting words from the question. 
This setting does not require gradient access, allowing us to test whether ABLE can construct attribution-space representations beyond white-box attribution methods.
Using 500 probing samples, we construct black-box ABLE representations.

\begin{table*}[t]
  \centering
  \small
  \caption{
  Exploratory retrieval for models with incomplete or inconsistent
  provenance metadata. Parentheses report the median layer-wise relative
  parameter distance. ``n/a'' indicates that the retrieved pair has no
  compatible weight shapes and therefore cannot be assessed in parameter
  space.
  }
  \label{tab:undocumented_retrieval}
  \begin{tabular}{llll}
  \toprule
  \textbf{Query} & \textbf{ABLE top-1}
  & \textbf{Log-Likelihood top-1} & \textbf{PhyloLM top-1} \\
  \midrule
  Dolly-v2-3b
  & Pythia-2.8b (.010)
  & Pythia-2.8b (.010)
  & Pythia-2.8b-deduped (.152) \\

  Dolly-v2-7b
  & Pythia-6.9b (.010)
  & Pythia-2.8b-deduped (n/a)
  & Pythia-6.9b (.010) \\

  Dolly-v2-12b
  & Pythia-12b (.010)
  & Pythia-6.9b (n/a)
  & Pythia-410m (n/a) \\

  Experiment26-7B
  & Jaskier-7b-dpo-v5.6 (.001)
  & Jaskier-7b-dpo-v5.6 (.001)
  & Ogno-Jaskier-merge-7b (.021) \\

  UNA-SimpleSmaug-34b
  & Luminex-34B-v0.2 ($<10^{-7}$)
  & OpenHermes-2-Mistral-7B (n/a)
  & Luminex-34B-v0.2 ($<10^{-7}$) \\

  34b-beta
  & UNA-SimpleSmaug-34b (.174)
  & Luminex-34B-v0.2 (.174)
  & Luminex-34B-v0.1 (.175) \\

  Reflection-Llama-3.1-70B
  & Llama-3-70B-Instruct (.0055)
  & Llama-3-70B-Instruct (.0055)
  & Llama-3-70B-Instruct (.0055) \\
  \bottomrule
  \end{tabular}
  \end{table*}

\begin{table}[t]
\centering
\caption{Stability of black-box word occlusion-based ABLE representations across non-overlapping 100-sample subsets. \textbf{Upper triangle}: Pearson $r$; \textbf{Lower triangle}: Spearman $\rho$.}
\label{tab:occlusion-stability}
\small
\begin{tabular}{lccccc}
\toprule
& \textbf{S1} & \textbf{S2} & \textbf{S3} & \textbf{S4} & \textbf{S5} \\
\midrule
\textbf{S1} &  & 0.987 & 0.990 & 0.996 & 0.991 \\
\textbf{S2} & 0.929 &  & 0.993 & 0.973 & 0.992 \\
\textbf{S3} & 0.929 & 1.000 &  & 0.983 & 0.994 \\
\textbf{S4} & 1.000 & 0.929 & 0.929 &  & 0.985 \\
\textbf{S5} & 0.952 & 0.952 & 0.952 & 0.952 &  \\
\bottomrule
\end{tabular}
\end{table}

We repeat the Llama-2 relationship case study from Section~\ref{subsec:lineage}. 
As shown in Fig.~\ref{fig:llama2_blackbox_500}, the perturbation-based representations also separate the annotated correlated and independent models, suggesting that the attribution-space formulation is not tied to GI alone.

We further evaluate stability by splitting the 500 probing samples into five non-overlapping 100-sample subsets and independently constructing black-box ABLE representations from each subset. 
Table~\ref{tab:occlusion-stability} reports the blackbox-attribution based ABLEs are highly consistent, with mean Pearson $r=0.988$ and mean Spearman $\rho=0.952$, indicating that the word-occlusion signal is stable in this controlled setting.

\section{Candidate Relationship Retrieval beyond Documentation}
\label{sec:undocumented_retrieval}
Our main experiments evaluate whether ABLE recovers documented model relationships. We additionally consider six models from our collection whose provenance metadata were missing or inconsistent at the time of our audit. We also include Reflection-Llama-3.1-70B as an external query because its reported base model had been publicly questioned. For each query, we rank candidate models by cosine similarity between their ABLE embeddings and retrieve the top-ranked candidate, excluding the query itself. We also perform retrieval using Log-Likelihood and PhyloLM features. 

\begin{table}[t]
\centering
\caption{Task-specific candidate selection performance using ABLE embeddings.}
\label{tab:candidate_selection}
\small
\begin{tabular}{lccc}
\toprule
\textbf{Domain} & \textbf{Benchmark} & \textbf{NDCG@10} & \textbf{Pairwise Acc.} \\
\midrule
Math & GSM8K & 0.850 & 0.729 \\
Reasoning & ARC & 0.972 & 0.790 \\
Knowledge & MMLU & 0.855 & 0.762 \\
Truthfulness & TruthfulQA & 0.990 & 0.775 \\
\bottomrule
\end{tabular}
\end{table}

For each retrieved pair with compatible parameter shapes, we align two-dimensional weight tensors by name and shape and compute the median layer-wise relative distance
\begin{equation}
d_{\mathrm{rel}}(q,c) =
  \operatorname{median}_{\ell}
  \frac{\lVert W_q^{(\ell)}-W_c^{(\ell)}\rVert_2}
       {\lVert W_q^{(\ell)}\rVert_2}.
\end{equation}

ABLE retrieves a weight-compatible candidate in all seven cases (Table \ref{tab:undocumented_retrieval}). Six retrieved pairs exhibit parameter distances characteristic of close derivatives, while the 34b-beta case shows weaker but nontrivial parameter-space proximity. The output-based baselines recover the same
candidate in several cases, but also return candidates that cannot be compared directly in parameter space. These case studies therefore do not establish that ABLE universally outperforms the baselines. Instead, they show that ABLE can prioritize plausible candidates when metadata
alone is insufficient. 

\section{Additional Capability Ranking Analyses}
\label{sec:capability_ranking}

\begin{table*}[t]
\centering
\caption{Benchmark ranking prediction comparison using Kendall's Tau. Higher values indicate stronger agreement with reported benchmark-score rankings.}
\label{tab:ranking_baseline}
\small
\begin{tabular}{lccccccc}
\toprule
\textbf{Method} & \textbf{ARC} & \textbf{HellaSwag} & \textbf{MMLU} & \textbf{TruthfulQA} & \textbf{WinoGrande} & \textbf{GSM8K} & \textbf{Mean} \\
\midrule
PhyloLM & 0.514 & 0.601 & 0.557 & 0.260 & 0.439 & 0.536 & 0.485 \\
Log-Likelihood & 0.585 & 0.618 & 0.484 & 0.476 & 0.538 & 0.392 & 0.516 \\
\textbf{ABLE} & \textbf{0.664} & \textbf{0.649} & \textbf{0.655} & \textbf{0.605} & \textbf{0.626} & \textbf{0.602} & \textbf{0.634} \\
\bottomrule
\end{tabular}
\end{table*}

\begin{table*}[t]
\centering
\caption{Relation prediction performance on the L32\_H4096\_A32 architecture group (88 models, 50 positive pairs, 100 random negative pairs, 5 seeds).}
\label{tab:able-vs-mother-l32}
\small
\begin{tabular}{lccccc}
\toprule
\textbf{Method} & \textbf{Accuracy} & \textbf{Precision} & \textbf{Recall} & \textbf{F1} & \textbf{AUC} \\
\midrule
\textbf{ABLE} & \textbf{0.913{\scriptsize $\pm$0.016}} & \textbf{0.939{\scriptsize $\pm$0.067}} &
0.840{\scriptsize $\pm$0.063} & \textbf{0.866{\scriptsize $\pm$0.039}} & 0.942{\scriptsize $\pm$0.045} \\
\textbf{MoTHer} & 0.880{\scriptsize $\pm$0.058} & 0.763{\scriptsize $\pm$0.098} &
\textbf{0.960{\scriptsize $\pm$0.049}} & 0.846{\scriptsize $\pm$0.066} & \textbf{0.974{\scriptsize
$\pm$0.024}} \\
\bottomrule
\end{tabular}
\end{table*}

We evaluate whether ABLE embeddings can support capability-oriented model ranking. 
First, we consider task-specific candidate selection to construct a shortlist of promising models for a target capability before expensive fine-tuning. 
We evaluate four domains using GSM8K, ARC, MMLU, and TruthfulQA as target benchmarks. 
For each domain, we rank models using predictions derived from ABLE embeddings and compare the ranking with reported benchmark scores using NDCG@10 and pairwise ranking accuracy.
As shown in Table~\ref{tab:candidate_selection}, ABLE achieves strong top-10 ranking quality, and pairwise ranking accuracy ranging from 0.729 to 0.790. 
These results suggest that ABLE embeddings can serve as a lightweight signal for preliminary task-specific candidate selection.

Second, we compare ABLE with output-based representation baselines for benchmark ranking prediction. 
We use Kendall's Tau to measure agreement between predicted model rankings and reported benchmark-score rankings across six benchmarks.
As shown in Table~\ref{tab:ranking_baseline}, ABLE obtains the highest Kendall's Tau across all six benchmarks.

\section{Adaptation of Unified Topological Signatures}
\label{sec:uts_adaptation}
Unified Topological Signatures (UTS)~\citep{rottach2025topology} was originally designed for text‑embedding models. To enable comparison with our proposed method, we adapt it to decoder‑only autoregressive LLMs. Each model encodes the same 5,183 SciFact documents \cite{wadden2020fact}; we mean‑pool the non‑padding final‑layer hidden states and L2‑normalize the resulting document vectors. We then apply the released implementation to obtain a 16‑dimensional UTS signature for each model. UTS is evaluated using the same pair construction, RBF‑SVM classifier, and five data splits as ABLE. This is a single‑corpus adaptation rather than a direct reproduction of the original UTS evaluation.

\section{Comparison with the Parameter-Based Baseline}
\label{sec:mother_comparison}
To compare with parameter-based methods, we conduct an additional head-to-head evaluation against MoTHer \cite{horwitz2025unsupervised}. 
Since MoTHer requires layer-wise weight matrices to have compatible dimensions, this comparison must be restricted to models with the same architecture. 
We therefore select the L32\_H4096\_A32 architecture group, which contains 88 models with 32 layers, hidden size 4096, and 32 attention heads, including Mistral-7B, Llama-2-7B, Llama-3-8B, CodeLlama-7B, and their fine-tuned variants. 
This setting is favorable to MoTHer because all compared models have compatible weight shapes.

We construct a relation prediction dataset within this architecture group. 
Positive pairs are collected from known model relationships in Hugging Face model cards, including fine-tuning, adapter, merging, and RL-related relationships. 
Negative pairs are sampled from model pairs without known relationships in the same architecture group. 
In total, the dataset contains 50 positive pairs and 100 negative pairs. 
We evaluate both methods using the same stratified 80/20 train/test splits over 5 random seeds. Both methods are evaluated with the same RBF-kernel SVM classifier.

As shown in Table~\ref{tab:able-vs-mother-l32}, ABLE achieves higher accuracy, precision, and F1, while MoTHer obtains higher recall and AUC. 
This shows that ABLE remains competitive with a parameter-based baseline in a same-architecture setting favorable to weight comparison. 
More broadly, ABLE and MoTHer serve different purposes. MoTHer is a specialized parameter-based method designed for recovering model tree relationships under compatible architectures, whereas ABLE provides a general model representation that can be reused across heterogeneous architectures and tokenizers.

\section{Model Selection and Model List}
\label{sec:model_list}
The model collection is a manually curated, purposive sample rather than a probability sample of the entire model ecosystem. A model was eligible if it (1) was an open-weight, decoder-only autoregressive language model, and (2) had an accessible checkpoint and tokenizer when the ABLE representations were constructed. Among eligible models, we sought broad coverage of model families, parameter scales, release years, and training types, including base, instruction-tuned, merged, and domain-adapted models. We additionally included cohorts with documented base--derivative relationships because they provide auditable positive examples for relation prediction. No model was included or excluded based on its ABLE embedding or downstream performance.

The resulting collection contains 239 models from 24 families, spanning 0.07B--72B parameters and releases from 2022 to 2025 (Table~\ref{tab:models}). It is not intended to be statistically representative of all publicly released language models. The accessibility and gradient requirements exclude
closed-source models and models whose checkpoints or implementations could not be evaluated through the common pipeline. The collection also overrepresents widely available families, common parameter scales, and models with documented derivation histories. Consequently, aggregate patterns in the model atlas should be interpreted as properties of this curated collection rather than estimates of model prevalence in the broader ecosystem.

\clearpage
\onecolumn
{\footnotesize
\topcaption{Complete list of models included in this study.}\label{tab:models}
\tablefirsthead{%
\toprule
\textbf{Model Name} & \textbf{Size} & \textbf{Model Type} & \textbf{Downloads} \\
\midrule}
\tablehead{%
\multicolumn{4}{l}{\small\sl continued from previous page}\\
\toprule
\textbf{Model Name} & \textbf{Size} & \textbf{Model Type} & \textbf{Downloads} \\
\midrule}
\tabletail{%
\midrule
\multicolumn{4}{r}{\small\sl continued on next page}\\}
\tablelasttail{}
\begin{center}
\begin{supertabular}{p{7cm}p{1.5cm}p{2cm}r}
\input{model_table_rows_single.tex}
\end{supertabular}
\end{center}
}
\clearpage
\twocolumn

\end{document}

%% file: model_table_rows_single.tex
01-ai/Yi-34B-Chat & 34B & Llama & 30,397 \\
01-ai/Yi-6B & 6B & Llama & 16,054 \\
01-ai/Yi-6B-200K & 6B & Llama & 18,155 \\
abocide/Qwen2.5-7B-Instruct-R1-forfinance & 7B & Qwen2 & 529 \\
AdaptLLM/medicine-chat & 6.57B & Llama & 1,244 \\
AdaptLLM/medicine-LLM & 6.57B & Llama & 168 \\
AdaptLLM/medicine-LLM-13B & 13B & Llama & 34 \\
allenai/tulu-2-dpo-70b & 70B & Llama & 2,543 \\
Arc53/docsgpt-7b-mistral & 7B & Mistral & 89 \\
augmxnt/shisa-base-7b-v1 & 7B & Mistral & 1,041 \\
bardsai/jaskier-7b-dpo-v5.6 & 7B & Mistral & 159 \\
berkeley-nest/Starling-LM-7B-alpha & 7B & Mistral & 1,493 \\
bigcode/octocoder & 15.5B & Other & 148 \\
bigscience/bloom-1b1 & 1.1B & Bloom & 6,749 \\
bigscience/bloom-1b7 & 1.7B & Bloom & 28,490 \\
bigscience/bloom-3b & 3B & Bloom & 8,904 \\
bigscience/bloom-560m & 0.56B & Bloom & 112,394 \\
bigscience/bloom-7b1 & 7B & Bloom & 11,136 \\
bigscience/bloomz-3b & 3B & Bloom & 4,235 \\
bigscience/bloomz-560m & 0.56B & Bloom & 846,321 \\
bigscience/bloomz-7b1 & 7B & Bloom & 5,299 \\
Biomimicry-AI/ANIMA-Nectar-v2 & 6.57B & Mistral & 975 \\
BioMistral/BioMistral-7B & 7B & Mistral & 120,331 \\
BioMistral/BioMistral-7B-DARE & 7B & Mistral & 1,904 \\
bxod/Llama-3.2-1B-Instruct-uz & 1B & Llama & 35 \\
CausalLM/34b-beta & 34B & Llama & 8,349 \\
cerebras/Cerebras-GPT-1.3B & 1.3B & GPT-2 & 1,266 \\
cerebras/Cerebras-GPT-111M & 0.11B & GPT-2 & 4,694 \\
cerebras/Cerebras-GPT-2.7B & 2.7B & GPT-2 & 977 \\
cerebras/Cerebras-GPT-256M & 0.26B & GPT-2 & 1,185 \\
cerebras/Cerebras-GPT-590M & 0.59B & GPT-2 & 1,088 \\
cerebras/Cerebras-GPT-6.7B & 6.7B & GPT-2 & 943 \\
cloudyu/Mixtral\_11Bx2\_MoE\_19B & 19B & Mixtral & 920 \\
codefuse-ai/CodeFuse-DeepSeek-33B & 33B & Llama & 127 \\
codellama/CodeLlama-13b-Instruct-hf & 13B & Llama & 19,108 \\
codellama/CodeLlama-34b-Instruct-hf & 34B & Llama & 20,472 \\
codellama/CodeLlama-7b-hf & 7B & Llama & 53,172 \\
cognitivecomputations/yayi2-30b-llama & 30B & Llama & 52 \\
continuedev/instinct & 4.86B & Qwen2 & 241 \\
ConvexAI/Luminex-34B-v0.1 & 34B & Llama & 7,565 \\
ConvexAI/Luminex-34B-v0.2 & 34B & Llama & 7,640 \\
CorticalStack/pastiche-crown-clown-7b-dare-dpo & 7B & Mistral & 71 \\
CultriX/NeuralTrix-bf16 & 6.57B & Mistral & 74 \\
databricks/dolly-v2-12b & 11.58B & GPT-NeoX & 3,047 \\
databricks/dolly-v2-3b & 3B & GPT-NeoX & - \\
databricks/dolly-v2-7b & 7B & GPT-NeoX & - \\
davanstrien/query-gen & 8B & Llama & 56 \\
DeepHat/DeepHat-V1-7B & 7B & Qwen2 & 1,599 \\
deepseek-ai/deepseek-coder-1.3b-base & 1.3B & Llama & 21,029 \\
deepseek-ai/deepseek-coder-6.7b-instruct & 6.7B & Llama & 44,962 \\
deepseek-ai/deepseek-llm-67b-chat & 67B & Llama & 1,815 \\
deepseek-ai/deepseek-math-7b-instruct & 7B & Llama & 5,002 \\
dfurman/HermesBagel-34B-v0.1 & 34B & Llama & 96 \\
EleutherAI/gpt-neo-1.3B & 1.3B & GPT-Neo & 28,899 \\
EleutherAI/gpt-neo-125m & 0.12B & GPT-Neo & 111,607 \\
EleutherAI/gpt-neo-2.7B & 2.7B & GPT-Neo & 16,759 \\
EleutherAI/llemma\_34b & 34B & Llama & 232 \\
EleutherAI/llemma\_7b & 7B & Llama & 820 \\
EleutherAI/pythia-1.4b & 1.4B & GPT-NeoX & 18,921 \\
EleutherAI/pythia-1.4b-deduped & 1.4B & GPT-NeoX & 8,043 \\
EleutherAI/pythia-12b & 12B & GPT-NeoX & 15,377 \\
EleutherAI/pythia-160m & 0.16B & GPT-NeoX & 89,343 \\
EleutherAI/pythia-1b-deduped & 1B & GPT-NeoX & 9,688 \\
EleutherAI/pythia-2.8b & 2.8B & GPT-NeoX & 23,704 \\
EleutherAI/pythia-2.8b-deduped & 2.8B & GPT-NeoX & 7,199 \\
EleutherAI/pythia-410m & 0.41B & GPT-NeoX & 43,070 \\
EleutherAI/pythia-6.9b & 6.9B & GPT-NeoX & 17,578 \\
EleutherAI/pythia-70m & 0.07B & GPT-NeoX & 137,323 \\
eren23/ogno-monarch-jaskier-merge-7b-OH-PREF-DPO & 7B & Mistral & 71 \\
facebook/opt-1.3b & 1.3B & OPT & 351,633 \\
facebook/opt-125m & 0.12B & OPT & 4,242,012 \\
facebook/opt-13b & 13B & OPT & 9,457 \\
facebook/opt-2.7b & 2.7B & OPT & 14,711 \\
facebook/opt-350m & 0.35B & OPT & 100,491 \\
facebook/opt-6.7b & 6.7B & OPT & 16,057 \\
fblgit/UNA-SimpleSmaug-34b-v1beta & 34B & Llama & 7,579 \\
FelixChao/llama2-13b-math1.2 & 13B & Llama & 1,093 \\
FelixChao/Scorpio-7B & 7B & Mistral & 58 \\
FelixChao/vicuna-7B-chemical & 7B & Llama & 1,086 \\
FelixChao/vicuna-7B-physics & 7B & Llama & 1,089 \\
golaxy/gowizardlm & 6.72B & Llama & 950 \\
google/codegemma-1.1-7b-it & 7B & Gemma & 150 \\
google/codegemma-2b & 2B & Gemma & 2,143 \\
google/codegemma-7b & 7B & Gemma & 2,006 \\
google/gemma-2b & 2B & Gemma & 180,941 \\
google/gemma-2b-it & 2B & Gemma & 59,882 \\
google/gemma-7b & 7B & Gemma & 54,233 \\
google/gemma-7b-it & 7B & Gemma & 108,191 \\
Harshvir/Llama-2-7B-physics & 7B & Llama & 1,024 \\
HuggingFaceH4/zephyr-7b-alpha & 7B & Mistral & 1,817 \\
HuggingFaceH4/zephyr-7b-beta & 7B & Mistral & 75,649 \\
ibivibiv/alpaca-dragon-72b-v1 & 72B & Llama & 877 \\
Imran1/MedChat3.5 & 7B & Mistral & 37 \\
inflatebot/MN-12B-Mag-Mell-R1 & 12B & Mistral & 429 \\
Intel/neural-chat-7b-v3-3 & 7B & Mistral & 31,866 \\
janhq/Jan-v1-4B & 4B & Qwen3 & 1,350 \\
janhq/Jan-v1-edge & 1.72B & Qwen3 & 51 \\
jiawei-ucas/Qwen-2.5-7B-ConsistentChat & 7B & Qwen2 & 16 \\
kamrr/llama-3-8b\_dolly\_lora & 8B & Llama & 44 \\
kevin009/llamaRAGdrama & 6.57B & Mistral & 809 \\
kloodia/alpaca & 8B & Llama & 82 \\
kloodia/lora-8b-alpaca-french & 8B & Llama & 87 \\
kloodia/lora-8b-bio & 8B & Llama & 101 \\
kloodia/lora-8b-code & 8B & Llama & 133 \\
kloodia/lora-8b-math & 8B & Llama & 93 \\
kloodia/lora-8b-medic & 8B & Llama & 109 \\
kloodia/lora-8b-physic & 8B & Llama & 123 \\
kyujinpy/Sakura-SOLRCA-Math-Instruct-DPO-v1 & 9.79B & Llama & 1,008 \\
LLM360/Amber & 7B & Llama & 3,443 \\
lmsys/vicuna-13b-v1.5 & 13B & Llama & 116,184 \\
lmsys/vicuna-13b-v1.5-16k & 13B & Llama & 10,289 \\
lmsys/vicuna-33b-v1.3 & 33B & Llama & 1,403 \\
lmsys/vicuna-7b-v1.1 & 7B & Llama & 2,310 \\
lmsys/vicuna-7b-v1.5 & 7B & Llama & 162,278 \\
lmsys/vicuna-7b-v1.5-16k & 7B & Llama & 4,445 \\
MaziyarPanahi/WizardLM-Math-70B-v0.1 & 70B & Llama & 76 \\
meta-llama/Llama-2-13b-chat-hf & 13B & Llama & 209,634 \\
meta-llama/Llama-2-70b-chat-hf & 70B & Llama & 8,026 \\
meta-llama/Llama-2-7b-chat-hf & 7B & Llama & 325,422 \\
meta-llama/Llama-2-7b-hf & 7B & Llama & 599,490 \\
meta-llama/Llama-3.1-8B-Instruct & 8B & Llama & 10,639,638 \\
meta-llama/LlamaGuard-7b & 7B & Llama & 1,720 \\
meta-llama/Meta-Llama-3-70B & 70B & Llama & 480,747 \\
meta-llama/Meta-Llama-3-70B-Instruct & 70B & Llama & 56,883 \\
meta-llama/Meta-Llama-3-8B & 8B & Llama & 2,236,547 \\
meta-llama/Meta-Llama-3-8B-Instruct & 8B & Llama & 1,531,726 \\
meta-llama/Meta-Llama-Guard-2-8B & 8B & Llama & 15,469 \\
meta-math/MetaMath-Llemma-7B & 7B & Llama & 1,033 \\
meta-math/MetaMath-Mistral-7B & 7B & Mistral & 2,201 \\
microsoft/MediPhi-Instruct & 3.72B & Phi-3 & 2,007 \\
microsoft/Orca-2-13b & 13B & Llama & 10,627 \\
microsoft/Orca-2-7b & 7B & Llama & 9,125 \\
microsoft/phi-1\_5 & 1.5B & Phi & 46,747 \\
microsoft/phi-2 & 2.65B & Phi & 1,071,489 \\
microsoft/Phi-3.5-mini-instruct & 3.72B & Phi-3 & 335,634 \\
mistralai/Mistral-7B-Instruct-v0.1 & 7B & Mistral & 491,519 \\
mistralai/Mistral-7B-v0.1 & 7B & Mistral & 355,407 \\
mlabonne/AlphaMonarch-7B & 7B & Mistral & 12,378 \\
mlabonne/NeuralHermes-2.5-Mistral-7B & 7B & Mistral & 105 \\
mosaicml/mpt-30b-instruct & 30B & MPT & 2,646 \\
mosaicml/mpt-7b & 7B & MPT & - \\
mosaicml/mpt-7b-8k & 7B & MPT & - \\
mosaicml/mpt-7b-8k-chat & 7B & MPT & - \\
mosaicml/mpt-7b-chat & 7B & MPT & 81,083 \\
mosaicml/mpt-7b-instruct & 7B & MPT & - \\
Neko-Institute-of-Science/metharme-7b & 7B & Llama & 1,061 \\
Neko-Institute-of-Science/pygmalion-7b & 7B & Llama & 1,102 \\
neovalle/H4rmoniousAnthea & 7B & Mistral & 87 \\
Nexusflow/Starling-LM-7B-beta & 7B & Mistral & 1,434 \\
nomic-ai/gpt4all-13b-snoozy & 13B & Llama & 1,012 \\
NousResearch/Hermes-4-14B & 14B & Qwen3 & 5,100 \\
NousResearch/Nous-Hermes-13b & 13B & Llama & 1,330 \\
NousResearch/Nous-Hermes-2-Yi-34B & 34B & Llama & 7,907 \\
NousResearch/Nous-Hermes-llama-2-7b & 7B & Llama & 1,253 \\
OpenAssistant/oasst-sft-4-pythia-12b-epoch-3.5 & 12B & GPT-NeoX & 1,519 \\
OpenBuddy/openbuddy-codellama2-34b-v11.1-bf16 & 34B & Llama & 1,143 \\
openchat/openchat-3.5-0106 & 6.57B & Mistral & 11,448 \\
openchat/openchat\_3.5 & 6.57B & Mistral & 2,622 \\
openlm-research/open\_llama\_7b & 7B & Llama & 18,239 \\
pbevan11/llama-3-8b-ocr-correction & 8B & Llama & 64 \\
PharMolix/BioMedGPT-LM-7B & 7B & Llama & 578 \\
Plaban81/Moe-4x7b-math-reason-code & 7B & Mixtral & 67 \\
prithivMLmods/rStar-Coder-Qwen3-0.6B & 0.6B & Qwen3 & 21 \\
project-baize/baize-v2-13b & 13B & Llama & 1,379 \\
Q-bert/Optimus-7B & 7B & Mistral & 1,106 \\
Qwen/Qwen1.5-0.5B & 0.5B & Qwen2 & 46,541 \\
Qwen/Qwen1.5-0.5B-Chat & 0.5B & Qwen2 & 57,690 \\
Qwen/Qwen1.5-1.8B & 1.8B & Qwen2 & 17,565 \\
Qwen/Qwen1.5-1.8B-Chat & 1.8B & Qwen2 & 89,028 \\
Qwen/Qwen1.5-14B & 14B & Qwen2 & 50,314 \\
Qwen/Qwen1.5-14B-Chat & 14B & Qwen2 & 15,409 \\
Qwen/Qwen1.5-32B & 32B & Qwen2 & 11,415 \\
Qwen/Qwen1.5-32B-Chat & 32B & Qwen2 & 38,786 \\
Qwen/Qwen1.5-4B & 4B & Qwen2 & 13,977 \\
Qwen/Qwen1.5-4B-Chat & 4B & Qwen2 & 17,665 \\
Qwen/Qwen1.5-7B & 7B & Qwen2 & 70,483 \\
Qwen/Qwen1.5-7B-Chat & 7B & Qwen2 & 16,435 \\
Qwen/Qwen2-0.5B & 0.5B & Qwen2 & 329,782 \\
Qwen/Qwen2-1.5B & 1.5B & Qwen2 & 212,957 \\
Qwen/Qwen2.5-0.5B-Instruct & 0.5B & Qwen2 & 2,456,001 \\
Qwen/Qwen2.5-1.5B & 1.5B & Qwen2 & 504,513 \\
Qwen/Qwen2.5-1.5B-Instruct & 1.5B & Qwen2 & 4,990,149 \\
Qwen/Qwen2.5-7B & 7B & Qwen2 & 821,385 \\
Qwen/Qwen2.5-7B-Instruct & 7B & Qwen2 & 6,379,230 \\
Qwen/Qwen2.5-Coder-7B & 7B & Qwen2 & 91,843 \\
Qwen/Qwen2.5-Coder-7B-Instruct & 7B & Qwen2 & 553,942 \\
Qwen/Qwen3-0.6B & 0.6B & Qwen3 & 8,292,645 \\
Qwen/Qwen3-0.6B-Base & 0.6B & Qwen3 & 161,046 \\
Qwen/Qwen3-1.7B & 1.7B & Qwen3 & 5,473,707 \\
Qwen/Qwen3-14B & 14B & Qwen3 & 686,547 \\
Qwen/Qwen3-4B & 4B & Qwen3 & 3,966,909 \\
Qwen/Qwen3-4B-Instruct-2507 & 4B & Qwen3 & 4,166,985 \\
Qwen/Qwen3-4B-Thinking-2507 & 4B & Qwen3 & 501,267 \\
Qwen/Qwen3-8B & 8B & Qwen3 & 4,376,885 \\
rishiraj/CatPPT-base & 6.57B & Mistral & 3,740 \\
rubenamtz0/llama-3-8b-lora-law2entity & 8B & Llama & 47 \\
sail/Sailor-7B & 7B & Qwen2 & 100 \\
scb10x/typhoon-7b & 7B & Mistral & 36,854 \\
SciPhi/SciPhi-Mistral-7B-32k & 7B & Mistral & 1,098 \\
SciPhi/SciPhi-Self-RAG-Mistral-7B-32k & 7B & Mistral & 1,241 \\
shleeeee/mistral-ko-tech-science-v1 & 6.57B & Mistral & 13 \\
stabilityai/stable-code-3b & 3B & StableLM & 3,617 \\
stabilityai/stablelm-2-1\_6b & 6B & StableLM & 1,829 \\
stabilityai/stablelm-3b-4e1t & 3B & StableLM & 18,150 \\
stabilityai/stablelm-base-alpha-3b & 3B & GPT-NeoX & 2,084 \\
stabilityai/stablelm-base-alpha-7b & 7B & GPT-NeoX & 1,980 \\
stabilityai/stablelm-tuned-alpha-3b & 3B & GPT-NeoX & 2,124 \\
stabilityai/stablelm-tuned-alpha-7b & 7B & GPT-NeoX & 2,393 \\
stabilityai/stablelm-zephyr-3b & 3B & StableLM & 9,110 \\
SUSTech/SUS-Chat-34B & 34B & Llama & 998 \\
teknium/OpenHermes-2-Mistral-7B & 7B & Mistral & 402 \\
teknium/OpenHermes-2.5-Mistral-7B & 7B & Mistral & 202,048 \\
tenyx/TenyxChat-7B-v1 & 7B & Mistral & 889 \\
Tesslate/UIGEN-X-4B-0729 & 4B & Qwen3 & 30 \\
Tesslate/WEBGEN-4B-Preview & 4B & Qwen3 & 55 \\
TheBloke/koala-13B-HF & 13B & Llama & 1,469 \\
TheBloke/tulu-30B-fp16 & 30B & Llama & 1,174 \\
theprint/TiTan-Qwen2.5-0.5B & 0.5B & Qwen2 & - \\
TigerResearch/tigerbot-13b-base & 13B & Llama & 54 \\
TigerResearch/tigerbot-7b-base-v1 & 7B & Bloom & 18 \\
TigerResearch/tigerbot-7b-base-v2 & 7B & Bloom & 39 \\
TigerResearch/tigerbot-7b-sft-v1 & 7B & Bloom & 47 \\
TigerResearch/tigerbot-7b-sft-v2 & 7B & Bloom & 32 \\
tiiuae/falcon-40b-instruct & 40B & Falcon & 50,635 \\
tiiuae/falcon-7b & 7B & Falcon & 64,169 \\
tiiuae/falcon-7b-instruct & 7B & Falcon & 52,800 \\
tiiuae/falcon-rw-1b & 1B & Falcon & 6,477 \\
tomg-group-umd/DynaGuard-8B & 8B & Qwen3 & 124 \\
upstage/SOLAR-10.7B-Instruct-v1.0 & 10.7B & Llama & 28,257 \\
venetis/llama3-8b-hermes-sandals-100 & 8B & Llama & 52 \\
VinitT/Sanskrit-llama & 8B & Llama & 64 \\
Vortex5/Lunar-Nexus-12B & 12B & Mistral & 19 \\
Vortex5/Moonlit-Shadow-12B & 12B & Mistral & 14 \\
WizardLM/WizardLM-70B-V1.0 & 70B & Llama & 17,901 \\
worldboss/llama-3-8b-axolotl-fine-tune-qlora & 8B & Llama & 70 \\
Writer/palmyra-med-20b & 20B & GPT-2 & 1,153 \\
yahma/llama-7b-hf & 7B & Llama & 4,983 \\
yam-peleg/Experiment26-7B & 7B & Mistral & 173 \\
zhengr/MixTAO-7Bx2-MoE-v8.1 & 6.57B & Mixtral & 19,428 \\
\specialrule{\heavyrulewidth}{0pt}{0pt}